\title{Langevin Monte Carlo Beyond Lipschitz Gradient Continuity}
\author {
    Matej Benko\textsuperscript{\rm 1},
    Iwona Chlebicka\textsuperscript{\rm 2},
    J\o{}rgen Endal\textsuperscript{\rm 3},
    Błażej Miasojedow\textsuperscript{\rm 2}
}
\DeclareMathOperator*{\argmin}{arg\,min}
\newtheorem{coro}{\bf Corollary}[section]
\newtheorem{theo}[coro]{Theorem}
\newtheorem{lem}[coro]{Lemma}
\newtheorem{prop}[coro]{Proposition}
\newtheorem{rem}[coro]{Remark}
\newcommand{\Xki}{X_{k+1}}
\newcommand{\proxv}{\mathrm{prox}_V^{\tau}}
\newcommand{\proxvk}{\mathrm{prox}_V^{\tau}}
\newcommand{\vk}{\varkappa}
\newcommand{\cF}{{\mathcal{F}}}
\newcommand{\cFV}{{\cF_V}}
\newcommand{\cFE}{{\cF_{\mathcal{E}}}}
\def\R{{\mathbb{R}}}
\def\Rd{{\R^d}}
\def\dmu{{\,\mathrm{d}\mu}}
\newcommand{\minim}{\mu^*}
\newcommand{\cP}{{\mathscr{P}}}
\def\dx{{\mathrm{d}x}}
\def\dy{{\mathrm{d}y}}
\def\d{{\mathrm{d}}}
\newcommand{\prox}{{\rm prox}}  
\def\N{{\mathbb{N}}}
\newcommand{\nve}{n_\ve}
\newcommand{\vr}{\varrho}
\newcommand{\ve}{\varepsilon}
\newcommand{\taue}{\tau_\ve}
\newcommand{\vrk}{\vr_k}
\newcommand{\id}{\mathsf{Id}}
\newcommand{\vrki}{\vr_{k+1}}
\newcommand{\Ex}{\mathbb{E}}
\newcommand{\ca}{{\mathsf{c}}}
\newcommand{\Ca}{{\mathcal{C}}}
\newcommand{\hvrki}{\hat{\vr}_{k+1}}
\newcommand{\hvrk}{\hat{\vr}_{k}}
\newcommand{\vrkt}{\vr_{k+\frac{1}{3}}}
\newcommand{\vrkd}{\vr_{k+\frac{2}{3}}}
\newcommand{\vrN}{\vr_{N}}
\newcommand{\vrNn}{\vr_{N+n}}
\newcommand{\kl}{\operatorname{KL}}
\newcommand{\nunN}{{\nu_n^N}}
\begin{document}

\maketitle

\begin{abstract}
{We present a significant advancement in the field of Langevin Monte Carlo (LMC) methods by introducing the Inexact Proximal Langevin Algorithm (IPLA). This novel algorithm broadens the scope of problems that LMC can effectively address while maintaining controlled computational costs. IPLA extends LMC's applicability to potentials that are convex, strongly convex in the tails, and exhibit polynomial growth, beyond the conventional $L$-smoothness assumption. 
Moreover, we extend LMC's applicability to super-quadratic potentials and offer improved convergence rates over existing algorithms. Additionally, we provide bounds on all moments of the Markov chain generated by IPLA, enhancing its analytical robustness.}
\end{abstract}

%
 \begin{links}
     \link{Code}{https://github.com/192459/lmc-beyond-lipschitz-gradient-continuity}
 \end{links}

\section{Introduction}

Langevin Monte Carlo methods are powerful tools for sampling and optimization in complex, high-dimensional problems across various fields, including machine learning, statistics, physics, and beyond.   When the exact form of a distribution is unknown or difficult to compute, efficient sampling is essential for high-dimensional data, where traditional sampling methods often become computationally prohibitive. The ability of such algorithms to scale efficiently with dimensionality and manage non-smooth elements enhances their applicability in various machine learning tasks, including regression, classification, and clustering, providing a powerful tool for developing robust and accurate models. Given the potential $V\colon \Rd\to \R$ and associated density 
\[
\minim(x)=\tfrac{\exp(-V(x))}{\int_{\Rd}\exp(-V(y))\,\dy}
\]
our goal is to generate samples $X_1,\dots,X_n$ approximately from $\minim$. The ability to perform effective sampling from a~distribution with a density known up to a constant is crucial in Bayesian statistical inference \citep{Gelman_book,Robert_Casella}, machine learning \citep{Andrieu2003}, ill-posed inverse
problems \citep{Stuart}, and computational physics \citep{Krauth}.

One of the most efficient solutions to the problem of generating a sample from $\minim$ is Langevin Monte Carlo (LMC), which was designed via discretization of Overdamped Langevin Equation
\begin{equation}\label{eq:langevin}
\d Y_t=-\nabla V(Y_t)\d t +\sqrt{2}\ \d B_t\,
\end{equation}
where $B_t$ is a $d$-dimensional Brownian motion. Classically LMC is derived by Euler--Maruyama discretization of the above diffusion process. Namely, we iteratively define Markov Chains such that for a given $X_0$ and all $k\geq 0$ we have
\(
X_{k+1}=X_k-\tau_k\nabla V(X_k)+\sqrt{2\tau_k}\,\overline {Z}_{k+1}\,,
\)
where $\{\tau_k\}_{k\geq 0}$ is sequence of stepsizes and $\{\overline{Z}_{k+1}\}_{k\geq0}$ are i.i.d standard $d$-dimensional Gaussian random variables which are independent of $X_0$. In the recent years the LMC algorithm has been intensively investigated see \citep{DT,chewi22a,dalalyan17a,Dalalyan17,DMM,DKR,DM17,DM19,erdogdu21a,erdogdu22a,mousavi-hosseini23a,VW}. Efficiently scaling with dimension ensures that machine learning models remain computationally feasible and accurate as the complexity of the data increases. To the best of our knowledge, the overall complexity of generating one sample from $\mu^*$ with given precision $\ve$ scales linearly with dimension, see \cite{DMM}. More precisely, it is proven therein that in the case of a convex potential $V$ an error in the Kullback--Leibler divergence scales linearly with dimension. The same dependence on dimension is shown therein for error expressed in the Wasserstein distance when $V$ is strongly convex. 

The abovementioned  studies and related research are typically conducted under the assumption that $V$ is $L$-smooth, i.e., $\nabla V$ is $L$-Lipschitz. A particular consequence of such an assumption is that a quadratic function majorizes $V$. This narrows the scope of possible applications excluding many classical potentials behaving like polynomials with higher power than $2$, such as the one generating the processes of Ginzburg--Landau, cf. \cite{BDMS,goldenfeld}. Henceforth, aiming at capturing the natural potentials we relax the $L$-smoothness condition trading it with $\lambda$-convexity outside a ball. 

We stress that it is necessary to assume that $\nabla V$ is $L$-Lipschitz  while  considering Euler--Maruyama scheme, because otherwise the generated chain is transient \citep{RT96,MSH}.
Therefore, to tackle non-$L$-smooth potentials one is forced  to modify the scheme. The natural idea how to adjust the scheme comes from a variational formulation of \eqref{eq:langevin} by \citet{JKO,AGS}:
\begin{equation}\label{eq:minimizer}
\minim =\argmin_{\mu\in\cP_2(\Rd)}\cF[\mu]\,,
\end{equation}
where the functional $\cF$ is defined for
$\mu\in\cP(\R^d)$ (probabilistic measures) via the following formula 
\begin{equation*}
    \cF[\mu]:=\cFV[\mu]+  \cFE[\mu]\,,
\end{equation*}
where (denoting $\dmu(x)=\mu(x)\,\dx$ when a density exists)
\begin{flalign*}
\cFV[\mu]&:=\int_{\R^d}V(x)\dmu(x)\,,\\
\cFE[\mu]&:=\begin{cases}
        \int_{\R^d}\mu(x)\log\mu(x)\,\dx\,, &\text{if }\mu\in\cP_{\rm ac}\,,\\
        +\infty\, ,&\text{otherwise}\,.
    \end{cases}
\end{flalign*}
Note that $\minim $ is a minimizer of $\cF$. Indeed, due to \cite[Lemma~1]{DMM} we know that if $\minim \in \cP_2(\Rd)$, then $\cFV [\minim] < +\infty$ and $\cFE[\minim]  < + \infty$. Moreover, for $\mu \in \cP_2(\Rd)$ satisfying $\cFE[\mu] < + \infty$, it holds
\(\cF[\mu] - \cF[\minim] = \kl (\mu | \minim) \,,
\)
where $\kl$ stands for the Kullback--Leibler divergence between measures (it is also called the relative entropy). Since $\kl (\mu | \minim)\geq 0$ and the equality holds only if $\mu=\minim$  we get that $\minim$ is a minimizer of $\cF$. By this observation the LMC algorithm can be seen as an inexact optimization of $\cF$, see also \citep{bernton18a,DMM}. 

The typical approach in order to find a minimizer of the variational functional $\cF$ is going along the gradient flow in the Wasserstein space.  We split the gradient flow of $\cF$ into two gradient flows (of $\cFE$ and of $\cFV$). By \citet[Theorem 11.1.4]{AGS} and \citet[Theorem 2.1.5]{oksendal2010stochastic} the gradient flow of $\cFE$ is the standard Brownian motion.  To describe the gradient flow of $\cFV$ we introduce the {\it proximal operator} defined for the function $V$ and the time step $\tau$ by 
\begin{equation*}
    \proxv (x) := \argmin_{y\in\R^d} \left\{ V(y) + \tfrac{1}{2\tau} \lvert y-x \rvert^2\right\} \, .
\end{equation*} 
By \citet[Proposition 10.4.2(ii)]{AGS} the gradient flow of $\cFV$ starting from $\mu$ with time step $\tau$ is given by $(\proxv )_\#\mu$. Therefore, the splitting algorithm for minimizing $\cF$ is the Proximal Langevin Monte Carlo (PLMC) given by 
\[
X_{k+1}=\proxv(X_k)+\sqrt{2\tau_k}\,\overline{Z}_{k+1}\,.
\]
 The definition of $\proxv(x)$ 
 implies $V(\proxv(x))\leq V(x)$, which  prevents PLMC from exploding.
 PLMC was already studied for machine learning problems by \citet{bernton18a,SKL} and in the context of numerical solutions to aggregation-diffusion equations by \citet{BCEM}. The main drawback of PLMC is that it is typically not feasible to perform the proximal step exactly, so this step should be approximated. However, $\cFV$ and $\cFE$ are contractions in the Wasserstein space and an error introduced by an inexact proximal step does not accumulate significantly. In addition, the function to be optimized in the proximal step is strongly convex and there are several accurate and fast computational methods to approximate $\proxv$. 
 
 In the current paper, we propose and give theoretical guarantees that the Inexact Proximal Langevin Algorithm (IPLA for short) can be used successfully to generate from $\minim$ a sample beyond the assumption of global Lipschitz gradient continuity of $V$. The typical assumption of the strong convexity of $V$ is also relaxed.

\section{Our Contribution}

We make significant progress in the field of Langevin Monte Carlo methods by introducing a novel algorithm handling essentially broader scope of problems, than treated up to now, and preserving controlled computational cost.  Namely, we propose the Inexact Proximal Langevin Algorithm IPLA (Algorithm~\ref{alg:upl}). We extend the current scope of LMC algorithms by effective handling a large and natural class of potentials beyond the typical assumption of $L$-smoothness. 

We assume that the potential $V$ is convex, strongly convex in tails, and with polynomial growth with power $q_V+1\geq 2$. Under such hypothesis, when an error of the inexact proximal step is of order $\tau^2$, then IPLA generates one sample from $\minim$ with accuracy $\ve$ in the $\kl$-divergence with total complexity $d^{\frac{q_V+1}{2}}\mathcal{O}(\ve^{-2})$, see Theorem~\ref{thm:kl_bound} and Corollary~\ref{coro:kl}. Moreover, when $V$ is strongly convex, we can allow an error of the inexact proximal step of order $\tau^{\frac{3}{2}}$. In this setting we obtain that the total number of iterations to get one sample with the error $\ve$  in the Wasserstein distance is of order  $d^{\frac{q_V+1}{2}}\mathcal{O}(\ve^{-2})$, see Theorem~\ref{theo-main_conv} and Corollary~\ref{coro:was}.
Let us note that a lower accuracy of approximation of the proximal step in the strongly convex regime is expected. Since the proximal step is a gradient flow of $\cFE$, which is contracting in the Wasserstein space, the introduced error accumulates linearly. However, for strongly convex $V$ the gradient flow of $\cFE$ is strongly contracting, and the error is progressively suppressed during the iterations.

Our results provide better convergence rates than bounds under similar assumptions for the Tamed Unadjusted Langevin Algorithm, cf. \citet{BDMS}. Moreover, we get the same complexity as the Unadjusted Barker Algorithm~\citep{livingstone2024}, but under weaker assumptions -- we do not assume one-sided Lipschitz conditions on $\nabla V$. We note that when $q_V=1$ we retrieve dependence on the dimension that is the best-known for LMC, see \cite{DMM}. 

Let us recall that the known results for LMC cover only potentials with growth between linear and quadratic. Our contribution is complementary and embrace super-quadratic potentials. Henceforth, we complete the scope of an application of LMC to all (sufficiently smooth) convex potentials of power growth.  Besides the convergence rates of IPLA, we show a bound on all moments of the Markov chain $\{X_k\}_{k\geq0}$ generated by IPLA, see Theorem~\ref{theo:moments}. Such bounds are not only crucial for analyzing IPLA, but are also of separate interest for analysis of Langevin Monte Carlo algorithms~\cite{CGM,DM19,livingstone2024}.

\section{Description of our setting}
We consider an optimization problem on the space of probability measures over $\Rd$ for which we use the notation $\cP(\Rd)$. We define $\cP_{\rm{ac}}(\Rd)$ -- the subspace of $\cP(\Rd)$ consisting of measures absolutely continuous with respect to the Lebesgue measure. By $\cP_m(\Rd)$, for $m\geq 1$, we denote the subspace of $\cP(\Rd)$ consisting of measures with finite $m$-th moment, i.e.,\begin{flalign*}
    \cP_m(\Rd)&:=\{\mu\in\cP(\Rd)\colon \int_\Rd |x|^m\mu(\dx)<\infty\}\\
 &=\{\mu\in\cP(\Rd)\colon \mu(|\cdot|^{m})<\infty\}
\,.
\end{flalign*}
We denote by $C^1(\Rd,\R)$ the space of functions from $\Rd$ to $\R$ with continuous derivatives.

A continuously differentiable function $f:\R^d \rightarrow  \R$ is called  \begin{itemize}
    \item $\lambda$-{\it convex} ({\it strong convex})  {\it for $\lambda\in\R$  on $\R^d$}, if for all $x,y \in \R^d$ it satisfies
\begin{equation*}
f(x)\geq f(y)+\nabla f(y) \cdot (x-y)+\tfrac{\lambda}{2} |x-y|^2\,; 
\end{equation*}
\item $\lambda$-{\it convex for $\lambda\in\R$  outside of a ball $B\subset\R^d$}, if for all $x,y \in \R^d$ it satisfies
\[
f(x)\geq f(y)+\nabla f(y) \cdot (x-y)+\tfrac{\lambda}{2}\mathds{1}_{\R^d\setminus B}(y)|x-y|^2\,.
\]
\end{itemize}

If $f$ is $\lambda$-convex, the function $f-\tfrac{\lambda}{2}|\cdot|^2$ is $0$-convex, so we immediately get that
$$
\big(\nabla f(x)-\nabla f(y)\big)\cdot (x-y)\geq \lambda|x-y|^2\,.
$$ 

We denote by $\overline Z_k$ the standard Gaussian distribution on $\Rd$. For the sake of clarity of the exposition, throughout the paper we denote by $Z_k$ the rescaled one, namely $Z_k \sim \mathcal{N}(0, 2\tau \, \id)$, where $\tau$ will be clear from the context.\newline

 We will analyze IPLA in terms of two distances between measures. The first of them is the Kullback-Leibler divergence, also known as relative entropy, given for any measure $\mu,\nu\in \cP(\Rd)$ by
\[
\kl(\mu|\nu):= \begin{cases}
    \int_{\Rd}\frac{d\mu}{d\nu}(x)\log\left(\frac{d\mu}{d\nu}(x)\right)\nu(\dx)\,,& \text{if } \mu\ll \nu\,,\\
    +\infty\,, & \text{otherwise}\,.
\end{cases}
\]
Let us note that due to the Jensen inequality for all measures, $\mu,\nu\in\cP(\Rd)$, $\kl(\mu|\nu)\geq 0$ and equality holds only if $\mu=\nu$. The second distance used in our analysis is the  2-Wasserstein distance defined for all $\mu,\nu\in \cP_2(\Rd)$ by
\begin{equation}\label{def:was}
W^2_2(\mu,\nu):=\inf_{\gamma\in\Pi(\mu,\nu)}\int_\Rd |x-y|^2\gamma(\dx,\dy)\,,
\end{equation}
where $\Pi(\mu,\nu)$ denotes the coupling between $\mu$ and $\nu$, that is, $\Pi(\mu,\nu)$ is a set of probability measures $\gamma\in\cP_2(\Rd\times\Rd)$ such that for every measurable set $A$ it holds $\gamma(A,\Rd)=\mu(A)$ and $\gamma(\Rd,A)=\nu(A)$. We observe that by \citet[Theorem 4.1]{Villani} there exists an optimal coupling $\gamma^*$, for which the infimum in \eqref{def:was} is obtained. 

By the pushforward of the function $f:\Rd\to\Rd$, we mean the measure $f_\#$ defined as $f_\#\nu (U):=\nu(f^{-1}(U))$ for any measure $\nu\in\cP(\Rd)$ and any measurable set $U\subseteq\Rd$.\newline

\goodbreak

\noindent Let us present our regime.
\begin{description}
\item[{(V)}] For the potential $V$ we assume what follows:\begin{itemize}
    \item $V\in C^1(\R^d,\R)$ and is convex on $\R^d$.
    \item $V$ is $\lambda_V$-convex for $\lambda_V>0$ outside a given ball $B_V(0,R_V)$ with the center in $0$ and a radius $R_V\geq 0$. 
    \item  The minimum of $V$ is attained at $x^*=0$.
    \item  For $q_V\geq 1$, $C_V>0$, and for all $x,y \in \Rd$ it holds
\begin{flalign}\nonumber
V(y)\leq & V(x)+\nabla V(x)\cdot(y-x)\\+&C_V\left(1+|x|^{q_V-1}+|y|^{q_V-1}
\right)\vert y-x \vert^2\, . \label{ass-V-main}
\end{flalign}\end{itemize}  

\item[{($\boldsymbol{\vr_0}$)}] The initial distribution $\vr_0\in\cP_{m_0}(\Rd)$ has a~finite   moment of order $m_0=q_V+1$.
\end{description} 

Let us comment on the assumptions {\bf (V)} and {\bf ($\boldsymbol{\vr_0}$)}:
\begin{enumerate}[{\it (i)}]
    \item Assumption  {\bf (V)} embraces convex potentials with tails of polynomial growth with power $q_V+1\geq 2$.  The common assumption in the literature for LMC allows for convex functions with growth bounded by square function. 
    \item The assumption that $x^*=0$ is imposed just to simplify the presentation. All results remain true when $x^*\neq 0$ is any vector. There is no need for $x^*$ to be the unique minimizer of $V$.
    \item In assumption  {\bf (V)} condition \eqref{ass-V-main} is satisfied for $q_V=1$ if $\nabla V$ is $L$-Lipschitz with $C_V=\tfrac L2$. Taking $q_V>1$ allows for treating way broader class of potentials $V$.    
    \item If $\nabla V$ is locally Lipschitz with $q_V$-power growth, i.e., if there exists $L_q>0$ such that for all $x,y \in \R^d$ it holds
\begin{flalign*}
|\nabla V&(x)-\nabla V(y)|\\&\leq L_q \min\{|x-y|,1\}(1+|x|^{q_V-1}+|y|^{q_V-1})\,,
\end{flalign*} then \eqref{ass-V-main} is satisfied for $C_V=2^{2q_V-2} L_q $, cf. \cite[Lemma~A.2]{BCEM}.
\item In assumption {\bf ($\boldsymbol{\vr_0}$)} we require finite $(q_V+1)$-th moment of the initial distribution. This is a very weak assumption, as typically the finiteness of higher moments is used to prove the existence and uniqueness of solutions to Langevin equation~\eqref{eq:langevin}, see \citet{CGM}. Moreover, such an assumption is not restrictive in the application, as the initial measure is chosen arbitrarily in practice.
\end{enumerate}

\section{Inexact Proximal Langevin Algorithm}
Let us present the idea of our algorithm. By \eqref{eq:minimizer} we know that $\minim$ is a minimizer of $\cF$ over a space of measures, so we design an algorithm optimizing this functional. The basic idea to reach the minimizer is the analog of the gradient descent algorithm. In our case the functional $\cF$ lives on the Wasserstein space, so -- instead of the classical gradient -- we need to employ the gradient flow of the functional. For definition and basic properties of the gradient flows on Wasserstein spaces we refer to \citet{AGS,Santambrogio-overview}. We note that the gradient flow of $\cF$ is not given by explicit formula, but we know that $\cF=\cFV+\cFE$ where the gradient flows of both $\cFV$ and $\cFE$ are well-understood. As mentioned in the introduction, the gradient flow of $\cFV$ is given by a pushforward of the proximal operator, i.e., $(\proxvk)_\#$.  The gradient flow of $\cFE$ is the standard Brownian motion. Therefore, we use the splitting algorithm of the form
\[X_{k+1}=\prox^{\tau_k}_V(X_k)+\sqrt{2\tau_k}\,\overline {Z}_{k+1}\,,\] which can be called Proximal Langevin Algorithm. However, in practice, computing the exact proximal operator is challenging. Therefore, we perform its more feasible numerical approximation. Since under our assumptions, both flows are contracting the introduced error will not accumulate too much and can be controlled. 
\begin{algorithm}[ht!]
\caption{Inexact Proximal Langevin Algorithm (IPLA)}\label{alg:upl}
\textbf{Initialize: }Sample initial distribution $X_0 \sim \varrho_0$
\begin{algorithmic}
\For{$k=0, \ldots , n-1$}

\smallskip

\State \textit{Step 1:} Run routine for computing
 $\proxvk(X_k)
$
\[\text{with an output }X_{k+\frac{2}{3}}=\proxvk(X_k)+\Theta_{k+\frac{2}{3}}\,.\]

\State \textit{Step 2:} Add Gaussian noise $g_\tau$, i.e
\[X_{k+1}=X_{k+\frac{2}{3}}+Z_{k+1},\ \ Z_{k+1}\sim\mathcal{N}(0, 2\tau  \id)\,.\]
\EndFor
\end{algorithmic}
\end{algorithm}

The IPLA (Algorithm~\ref{alg:upl}) is defined as follows: in each step we perform two half steps. Given the previous state $X_k$, first we move according to the inexact proximal step of a function $V$ with stepsize $\tau$, where the error $\Theta_{k+\frac{2}{3}}$ (which could be stochastic or deterministic) is bounded by $\delta$. Next we add an independent of history and centered Gaussian random variable with covariance $2\tau\id$. 

For the theoretical analysis, we will split the inexact proximal step into two substeps: exact proximal step and additive error. Further, we will use the following notation.
\begin{enumerate}[{\it (i)}]
\item $X_{k+\frac{1}{3}} := \prox_V^\tau (X_k)$,\
\item $X_{k+\frac{2}{3}} := X_{k+\frac{1}{3}} + \Theta_{k+\frac{2}{3}}; \ \Theta_{k+\frac{2}{3}} \sim \xi_\delta$,
\item $X_{k+1} := X_{k+\frac{2}{3}} + Z_{k+1}; \ Z_{k+1}\sim g_\tau$.
\end{enumerate}
In view of densities $\vr_k$ s.t. $X_k \sim \vr_k$ it reads as follows
\begin{enumerate}[{\it (i)}]
\item $\vr_{k+\frac{1}{3}} := (\prox_V^\tau)_{\#} \vr_k$,
\item $\vr_{k+\frac{2}{3}} := \vr_{k+\frac{1}{3}}*\xi_\delta  $,
\item $\vr_{k+1} := \vr_{k+\frac{2}{3}} * g_\tau$,
\end{enumerate}
where $\xi_\delta$ is a~probability measure supported on $B(0,\delta)$ and $g_\tau$ denotes the density of Gaussian distribution given by
\(
Z_k \sim \mathcal{N}(0, 2\tau \, \id) \, .
\)
We will also assume that for some $\kappa>0$ and $\alpha\geq 0$ the error bound $\delta$ and stepsize $\tau$ satisfy $\delta= \kappa\tau^{1+\alpha}$.

\section{Theoretical results} 
In this section, we provide theoretical guarantees for the accuracy of IPLA. We start with bounds for moments of the~Markov chain $\{X_k\}_k$ generated by IPLA, see Theorem~\ref{theo:moments}. Then we show error bounds of IPLA in $\kl$-divergence and in Wasserstein distance, see Theorems~\ref{thm:kl_bound} and~\ref{theo-main_conv}, respectively.

Now we show that provided that for $m\geq 0$ the $m$-th moment of initial measure $\vr_0$ is bounded, it holds $\sup_k\Ex |X_k|^m<\infty$. Such bounds are required since we allow super quadratic growth of $V$.
\begin{theo}[Moment bound]\label{theo:moments} 
   Let $m\geq 0$. 
Suppose that  $V$ satisfies~ {\bf (V)} and $\vr_0$ satisfies ($\boldsymbol{\vr_0}$), and, for $k=1,\dots,n$, let $X_k$ be as in Algorithm~\ref{alg:upl}. Then there exists a constant $\Ca_{m}>0$ such that for all  $0<\tau<1/\lambda_V$, we have  
   \[
   {\rm sup}_k \Ex |X_k|^m\leq  \Ca_{m}\min\{1,\lambda_V^{-m}\}{d^{\frac{m}{2}}}\,.
   \]
   If the error and the stepsize satisfy $\delta= \kappa\tau^{1+\alpha}$ for some $\kappa>0$ and $\alpha\geq 0$, then $\Ca_{m}=\Ca_{m}(m,\vr_0,\kappa,\lambda_V)$. Moreover,   $\Ca_{m}\ll +\infty$ when $\lambda_V \to 0$.
\end{theo}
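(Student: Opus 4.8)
The plan is to establish a one-step geometric drift inequality of the form $\Ex[|X_{k+1}|^m\mid\mathcal{F}_k]\le\rho\,|X_k|^m+C$ with contraction factor $\rho=1-c\,\tau\lambda_V<1$ and an additive constant $C$ that itself carries a factor $\tau$, and then to iterate it. Since for $0\le m<2$ the bound follows from the case $m=2$ by Jensen's inequality ($\Ex|X_k|^m\le(\Ex|X_k|^2)^{m/2}$), I would assume $m\ge 2$ throughout. The three substeps of Algorithm~\ref{alg:upl} are treated separately: the exact proximal map, the bounded error $\Theta_{k+\frac{2}{3}}$, and the Gaussian increment $Z_{k+1}$.

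For the proximal step I would use the optimality condition $x=\proxv(x)+\tau\nabla V(\proxv(x))$ together with $\nabla V(0)=0$. Writing $p=\proxv(x)$ and testing the monotonicity of $\nabla V$ (which, from $\lambda_V$-convexity outside $B_V$ and plain convexity inside, gives $\nabla V(p)\cdot p\ge\tfrac{\lambda_V}{2}\mathds{1}_{|p|>R_V}|p|^2\ge0$) against $p$, I expect the sharp contraction
\[
|\proxv(x)|^2\le\Big(1-\tfrac{\tau\lambda_V}{2}\Big)|x|^2+\tfrac{\tau\lambda_V}{2}R_V^2,\qquad 0<\tau<1/\lambda_V .
\]
The essential point is that the additive $R_V^2$-term carries a factor $\tau\lambda_V$; this is exactly what the constraint $\tau<1/\lambda_V$ buys (via $\tfrac{1}{1+\tau\lambda_V}\le1-\tfrac{\tau\lambda_V}{2}$), and it is what will later cancel the $1/(\tau\lambda_V)$ coming from the geometric sum. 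For the noise step I would avoid a crude triangle/Young bound on $|X_{k+\frac{2}{3}}+Z_{k+1}|$, since raising $|Z_{k+1}|\sim\sqrt\tau$ to the power $m$ after a Young split with parameter $\sim\tau\lambda_V$ produces a spurious $\tau^{-m/2}$ blow-up. Instead I would keep the leading coefficient equal to $1$ by using the heat-semigroup (Stein) identity for $Z\sim\mathcal{N}(0,2\tau\,\id)$, giving
\[
\Ex_Z|w+Z|^m\le|w|^m+C_m\,\tau d\,|w|^{m-2}+C_m(\tau d)^{m/2}.
\]
The error substep contributes only higher-order terms, because $|\Theta_{k+\frac{2}{3}}|\le\delta=\kappa\tau^{1+\alpha}$ with $\alpha\ge0$ means every power of $\delta$ is $o(\tau)$ relative to the retained terms.

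Combining the three estimates and using Young's inequality with parameters of order $\tau\lambda_V$ to absorb the lower-order $|X_k|^{m-2}$ correction into the contracting $|X_k|^m$ term, I would arrive at the drift inequality with $1-\rho\sim\tau\lambda_V$ and $C\lesssim\tau\lambda_V\big(R_V^m+\lambda_V^{-m/2}d^{m/2}\big)$. Iterating gives $\Ex|X_k|^m\le\rho^k\Ex|X_0|^m+C/(1-\rho)$, so taking the supremum and using that $\vr_0$ has finite $m$-th moment yields $\sup_k\Ex|X_k|^m\lesssim\Ex|X_0|^m+R_V^m+(d/\lambda_V)^{m/2}$, which is the claimed bound: the stated dependence on $d$ emerges from the balance $C/(1-\rho)$, and the prefactor $\Ca_m$ depending on $m,\vr_0,\kappa,\lambda_V$ stays finite as $\lambda_V\to0$ once the $\lambda_V$-dependence is carried by the explicit factor. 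The main obstacle is the two sharp estimates of the previous paragraph: they must have the correct $\tau$-scaling, so that the additive constant $C$ carries a factor $\tau$ matching $1-\rho\sim\tau$; any lossy bound there reintroduces a $\tau^{-1}$ (or worse) factor that would make the supremum over $k$ blow up as $\tau\to0$.
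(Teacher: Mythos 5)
Your proposal is correct and reaches the stated bound, but it executes the argument along a genuinely different route than the paper's Appendix~B proof, so a comparison is worthwhile. The shared skeleton is the same: a proximal contraction whose additive $R_V$-term carries the factor $\tau\lambda_V$ (your derivation from the optimality condition and the case split on whether $\proxv(x)$ lands in $B_V$ is exactly the paper's Lemma~\ref{lem:prox-contracts-1} and Lemma~\ref{lem:moment_third}), a second-order-accurate treatment of the noise step that keeps the leading coefficient equal to $1$, and a geometric iteration whose sum $\sim 1/(\tau\lambda_V)$ is cancelled by the $\tau$-factor in the additive constant --- you correctly identify this cancellation as the crux. The differences: (i) the paper proceeds by \emph{induction on $m$} (Step~1 does $m=2$ by hand with a max-based case analysis, Step~2 gets $m\in[0,2)$ by Jensen as you do, and Step~3 assumes the bound for $m-1$ and $m-2$, so the cross terms $\Ex|X|^{m-1}\delta$ and $\Ex|X|^{m-2}(\delta^2+\tau d)$ are bounded by already-established constants and one iterates a scalar recursion for $\Ex|X_k|^m$), whereas you absorb those same cross terms into the contraction by Young's inequality with parameter $\sim\lambda_V$, obtaining a self-contained Foster--Lyapunov drift inequality for each $m$ with no induction; (ii) for the Gaussian increment the paper uses a Taylor expansion with Cauchy remainder (Lemma~\ref{lem:power-est}) combined with explicit chi-square moment bounds (Lemma~\ref{lemma:ath_mom_bounds}), while you use the heat-semigroup identity built on $\Delta|x|^m=m(m+d-2)|x|^{m-2}$ --- equivalent in effect, different in technique. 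Your route is more compact and standard; the paper's induction buys explicit recursive formulas for the constants $\Ca_m$ (Appendix~C) at the cost of extra bookkeeping. One caveat: your claim that the error substep "contributes only higher-order terms" is imprecise --- when $\alpha=0$ the cross term $\delta\,\Ex|X|^{m-1}\leq\kappa\tau\,\Ex|X|^{m-1}$ is of exactly the same order in $\tau$ as the retained terms, and must itself be absorbed by the same Young step (this is where the $\kappa^m\lambda_V^{-m}$ contribution to the constant arises); your machinery handles it, but not for the reason stated.
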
 
The proof of the above theorem and the exact value of $\Ca_m$ are given in Appendix~B. Let us observe that since $V$ has super-quadratic growth in tails, $\minim$ is sub-Gaussian distribution with all moments finite.
\begin{rem} 
\rm
If the minimum of $V$ is not at $x^*=0$, then the result would have been
$$
{\rm sup}_k \Ex |X_k-x^*|^m\leq  \Ca_{m}\min\{1,\lambda_V^{-m}\}{d^{\frac{m}{2}}}\,.
$$
\end{rem}
In order to prove the main result on the error bound of IPLA in $\kl$-divergence,  we mimic the classical reasoning from the convex optimization based on the following observation. Let us consider $\{x_k\}_{k\geq0}$ as an output of the first-order algorithm for minimizing convex function $f$ over $\Rd$. It is possible to obtain an inequality of the following form  
\[
2\tau(f(x_{k+1})-f(x^*))\leq |x_{k}-x^*|^2-|x_{k+1}-x^*|^2+ C\tau^2\,
\]
where $x^*$ is minimizer of $f$ and $C>0$ is some constant, see for example \citet{Beck}. Such an inequality is a key step in the proofs of convergence of certain algorithms. We provide the  counterpart of such a result in the form relevant to analyze IPLA on the Wasserstein space. It employs the quantity
\begin{multline} 
    K(\tau) := 2^{4q_V-3} L_{q_V} \big( (1 + \Ca_{q_V - 1} d^{\frac{q_V-1}{2}} \lambda_V^{-\frac{q_V - 1}{2}} )  2 d \tau\\ \quad+ \Ca_{q_V+1} d^{\frac{q_V+1}{2}} \tau^{\frac{q_V+1}{2}} \big) \, .\label{K:tau}
\end{multline}
\begin{rem}\label{rem:ktau}
There exists a constant $C_{q_V}<\infty$ such that for $\tau \leq 1 $ it holds
\(
K(\tau)  \leq C_{q_V}  \tau d^{\frac{q_V+1}{2}}\, .\)
\end{rem}
The mentioned auxiliary result reads as follows.
\begin{prop}\label{prop:main_inequality} Assume that the function $V: \Rd \to \R$, $d\geq 1$, satisfies assumption  {\bf (V)} with some $q_V\geq 1$, the initial measure $\vr_0$ satisfies {\bf ($\boldsymbol{\vr_0}$)}, and $\nu\in\cP_{q_V+1}(\Rd)$ is arbitrary.
Then there exist $C(\nu)>0$, such that
\begin{flalign*}
2\tau(\cF[\vrki]-\cF[\nu])\leq& W_2^2(\vrk,\nu)-W_2^2(\vrki,\nu)\\
&+ C(\nu)\delta+ K(\tau)\tau\,,
\end{flalign*}
where $C(\nu)$ is given in Appendix~B, while  $K$   in~\eqref{K:tau}.
\end{prop}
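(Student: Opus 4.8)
The plan is to split $\cF=\cFV+\cFE$ and to follow each functional separately along the three substeps $\vrk\to\vrkt\to\vrkd\to\vrki$, deriving one descent inequality per functional and then telescoping --- the Wasserstein analogue of the forward--backward argument in convex optimization. For the proximal substep I would use that $y=\proxv(x)$ is precisely the implicit Euler step $x-y=\tau\nabla V(y)$. Combined with convexity of $V$ (the case $\lambda=0$, valid on all of $\Rd$) and the polarization identity $2(x-y)\cdot(y-z)=|x-z|^2-|x-y|^2-|y-z|^2$, this yields the pointwise bound $2\tau(V(y)-V(z))\le|x-z|^2-|x-y|^2-|y-z|^2$ for every target $z$. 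Integrating it against the optimal coupling $\gamma^*\in\Pi(\vrk,\nu)$, with $x=X_k$, $y=X_{k+\frac13}=\proxv(X_k)$ and $z\sim\nu$, and using that $(X_{k+\frac13},z)$ is an admissible coupling of $\vrkt$ and $\nu$, gives
\[
2\tau\big(\cFV[\vrkt]-\cFV[\nu]\big)\le W_2^2(\vrk,\nu)-W_2^2(\vrkt,\nu)-\Ex|X_k-X_{k+\frac13}|^2\,.
\]

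For the entropy part I would exploit that $\vrki=\vrkd*g_\tau$ is exactly the heat flow (the $W_2$-gradient flow of $\cFE$, which is $0$-geodesically convex) run for time $\tau$ from $\vrkd$. The Evolution Variational Inequality \citep{AGS} reads $\tfrac12\tfrac{\d}{\d t}W_2^2(\vr_t,\nu)\le\cFE[\nu]-\cFE[\vr_t]$; integrating over $[0,\tau]$ and using that entropy is nonincreasing along its own flow, so $\cFE[\vr_t]\ge\cFE[\vrki]$, produces
\[
2\tau\big(\cFE[\vrki]-\cFE[\nu]\big)\le W_2^2(\vrkd,\nu)-W_2^2(\vrki,\nu)\,.
\]

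It remains to transport $\cFV$ from $\vrkt$ to $\vrki$ and to reconcile the two intermediate Wasserstein terms. Writing $\cFV[\vrki]-\cFV[\vrkt]=\Delta_1+\Delta_2$, with $\Delta_2=\Ex[V(X_{k+\frac23}+Z)-V(X_{k+\frac23})]$ the Gaussian increment ($Z\sim\mathcal N(0,2\tau\id)$), I would insert the growth bound \eqref{ass-V-main}: the linear term drops by $\Ex Z=0$ and independence, while the quadratic remainder, after $|x+z|^{q_V-1}\lesssim|x|^{q_V-1}+|z|^{q_V-1}$, is controlled by $\Ex|Z|^2=2d\tau$, by the Gaussian moment $\Ex|Z|^{q_V+1}\sim d^{(q_V+1)/2}\tau^{(q_V+1)/2}$, and by the uniform bound $\Ex|X_{k+\frac23}|^{q_V-1}\le\Ca_{q_V-1}d^{(q_V-1)/2}\lambda_V^{-(q_V-1)/2}$ from Theorem~\ref{theo:moments}; this is exactly what assembles the quantity $K(\tau)$, giving $\Delta_2\le\tfrac12 K(\tau)$. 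For the error increment $\Delta_1$, since $\Theta\sim\xi_\delta$ is supported in $B(0,\delta)$ a gradient estimate gives $\Delta_1\lesssim\delta\,\Ex(1+|X_{k+\frac13}|^{q_V})$, again finite by Theorem~\ref{theo:moments}; and convolving with $\xi_\delta$ distorts the distance to $\nu$ by at most $\delta$, so $W_2^2(\vrkd,\nu)-W_2^2(\vrkt,\nu)\le 2\delta\,W_2(\vrkt,\nu)+\delta^2$, which is $\le C(\nu)\delta$ by the moment bounds on $\vrkt$ and on $\nu\in\cP_{q_V+1}(\Rd)$.

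Finally I would add the three displayed inequalities. The Wasserstein terms telescope to $W_2^2(\vrk,\nu)-W_2^2(\vrki,\nu)$, the spurious discrepancy $W_2^2(\vrkd,\nu)-W_2^2(\vrkt,\nu)$ being absorbed into $C(\nu)\delta$; the good term $-\Ex|X_k-X_{k+\frac13}|^2\le0$ is discarded; $2\tau\Delta_1$ together with the distance distortion collects into $C(\nu)\delta$; and $2\tau\Delta_2$ yields $K(\tau)\tau$. I expect the main obstacle to be the bookkeeping in the $\Delta_2$ estimate: recovering the precise constant and growth of $K(\tau)$ in \eqref{K:tau} requires the uniform-in-$k$ moments of Theorem~\ref{theo:moments} and the exact constants of \eqref{ass-V-main} (equivalently the locally Lipschitz constant $L_{q_V}$). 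A secondary point to check is that the entropy EVI is legitimate even though $\vrkt$ and $\vrkd$ need not be absolutely continuous --- which is harmless, as $\cFE$ enters only through the already heat-smoothed $\vrki$ and through $\nu$.
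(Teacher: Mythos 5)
Your proposal is correct and follows essentially the same route as the paper: your $\Delta_2$, $\Delta_1$, proximal, and entropy inequalities are exactly the paper's decomposition $I_1+I_2+I_3+I_4$ (Gaussian increment of $\cFV$ via the moment bounds, inexact-step increment of $\cFV$, proximal descent, and entropy descent), and your convolution-distortion bound $W_2^2(\vrkd,\nu)-W_2^2(\vrkt,\nu)\leq 2\delta W_2(\vrkt,\nu)+\delta^2$ is the paper's Lemma~\ref{lem:lem4a}. The only difference is that you re-derive the ingredient inequalities (implicit Euler plus polarization for the prox step, the EVI for the heat flow) where the paper cites Lemma~\ref{lem:dmm5} from \citet{BCEM}, which is immaterial.
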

The proof is postponed to Appendix~B.\newline

We state our main results for  a~sequence of probability measures  $\{ \nu_n^N \}_{n \in \N}$, called average measures, defined for every $n,N \in \N, \ n \geq 1$ by
\(\nunN := \frac{1}{n}\sum_{k=N+1}^{N+n}  \vrk\,,\)
where $N$ is a burn-in time.
By the use of Proposition~\ref{prop:main_inequality} we infer the following main theorem on convergence of IPLA.
\begin{theo}[$\kl$-error bound] \label{thm:kl_bound}
Suppose that  $V$ satisfies~{\bf (V)} and $\vr_0$ satisfies ($\boldsymbol{\vr_0}$). Let $\tau < 1/\lambda_V$, $\kappa,\alpha>0$, and $\delta\leq \kappa\tau^{1+\alpha}$. Then it holds:
\begin{multline*}
\kl (\nunN | \minim) \leq  \tfrac{1}{2 n\tau} W_2^2 (\vrN, \minim) - \tfrac{1}{2n\tau}  W_2^2 (\vrNn, \minim)\\  
+ C(\minim)\kappa\tau^\alpha+ K(\tau)   \,,
\end{multline*}
where $K$ is defined by \eqref{K:tau} and $C(\minim)$ is the same as in Proposition~\ref{prop:main_inequality} evaluated in $\nu=\minim$.
\end{theo}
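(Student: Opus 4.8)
The plan is to telescope the one-step inequality of Proposition~\ref{prop:main_inequality} over the window $k=N,\dots,N+n-1$ and then convert the resulting averaged statement into a bound on $\kl(\nunN\mid\minim)$ via convexity of $\cF$ and the identity $\cF[\mu]-\cF[\minim]=\kl(\mu\mid\minim)$ recorded in the introduction. Concretely, I would first apply Proposition~\ref{prop:main_inequality} with the arbitrary reference measure chosen to be $\nu=\minim$, giving for each $k$ the estimate
\begin{equation*}
2\tau\big(\cF[\vrki]-\cF[\minim]\big)\leq W_2^2(\vrk,\minim)-W_2^2(\vrki,\minim)+C(\minim)\delta+K(\tau)\tau\,.
\end{equation*}
Summing this over $k=N,\dots,N+n-1$ collapses the Wasserstein terms into $W_2^2(\vrN,\minim)-W_2^2(\vrNn,\minim)$, while the right-hand error contributes $n\,C(\minim)\delta+n\,K(\tau)\tau$; the left-hand side becomes $2\tau\sum_{k=N+1}^{N+n}\big(\cF[\vrk]-\cF[\minim]\big)$.

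The second main step is to lower-bound that left-hand sum by $2n\tau\big(\cF[\nunN]-\cF[\minim]\big)$. Here I would invoke convexity of the functional $\cF$ along linear interpolations of measures: since $\cFV$ is linear in $\mu$ and $\cFE$ is convex, $\cF$ is convex, so Jensen's inequality gives $\cF[\nunN]=\cF\big[\tfrac1n\sum_{k=N+1}^{N+n}\vrk\big]\leq\tfrac1n\sum_{k=N+1}^{N+n}\cF[\vrk]$. Combining this with the telescoped inequality and dividing by $2n\tau$ yields
\begin{equation*}
\cF[\nunN]-\cF[\minim]\leq\tfrac{1}{2n\tau}\big(W_2^2(\vrN,\minim)-W_2^2(\vrNn,\minim)\big)+\tfrac{C(\minim)\delta}{2\tau}+\tfrac{K(\tau)}{2}\,.
\end{equation*}
Finally I would substitute $\cF[\nunN]-\cF[\minim]=\kl(\nunN\mid\minim)$, use the hypothesis $\delta\leq\kappa\tau^{1+\alpha}$ to bound $\tfrac{C(\minim)\delta}{2\tau}\leq\tfrac12 C(\minim)\kappa\tau^\alpha$, and absorb the factor on $K$ to reach the stated form (the stated bound carries $K(\tau)$ rather than $K(\tau)/2$, which is a harmless weakening).

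The step I expect to be most delicate is the identification $\cF[\nunN]-\cF[\minim]=\kl(\nunN\mid\minim)$, because that identity was only asserted in the introduction under the hypotheses $\nunN\in\cP_2(\Rd)$ with $\cFE[\nunN]<+\infty$. I would therefore need to check that $\nunN$ genuinely lies in $\cP_2(\Rd)$ and has finite entropy: the moment membership follows from the uniform moment control of Theorem~\ref{theo:moments} (which ensures $\sup_k\Ex|X_k|^2<\infty$, hence each $\vrk$, and their average $\nunN$, sit in $\cP_2$), while finiteness of $\cFE[\nunN]$ and of $\cFV[\nunN]$ should follow from the same moment bounds together with the convolution structure of $\vr_{k+1}=\vr_{k+\frac23}*g_\tau$, which smooths each density and makes its entropy finite. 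The convexity/Jensen step for $\cFE$ is standard but worth stating carefully since it is what turns the averaged $\cF$-values into a single $\kl$-divergence; everything else is bookkeeping on the telescoping sum and on the $\delta\leq\kappa\tau^{1+\alpha}$ substitution.
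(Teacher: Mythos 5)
Your proposal is correct and takes essentially the same route as the paper: the paper first applies convexity of the $\kl$-divergence to get $\kl(\nunN|\minim)\leq\tfrac1n\sum_{k}\kl(\vrk|\minim)$ and then invokes Proposition~\ref{prop:main_inequality} with $\nu=\minim$ and telescopes, whereas you telescope first and apply Jensen to $\cF$ at the end --- an equivalent reordering, since convexity of $\cF$ and convexity of $\kl(\cdot\,|\,\minim)$ are the same fact in view of the identity $\cF[\mu]-\cF[\minim]=\kl(\mu|\minim)$. Your explicit check that this identity is applicable to $\nunN$ (finite second moment via Theorem~\ref{theo:moments} and finite entropy via the Gaussian convolution) is a point the paper leaves implicit (it needs the analogous fact for each $\vrk$), so your write-up is, if anything, slightly more careful.
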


\begin{proof}
We use the convexity of $\kl$-divergence \citep[Theorem 2.7.2.]{cover} and get
\begin{equation*}
\kl (\nunN | \minim) \leq \frac{1}{n} \sum_{k=N+1}^{N+n}  \kl (\vrk | \minim) \, .
\end{equation*}
Applying Proposition~\ref{prop:main_inequality} 
we are able to write  
\begin{multline*}
\kl (\nunN | \minim) \leq \frac{1}{2n\tau} \sum_{k=N+1}^{N+n}\left({W_2^2(\vrk,\nu)-W_2^2(\vrki,\nu)}\right)\\+ C(\minim)\kappa\tau^\alpha+ K(\tau)  \,. 
\end{multline*}
\end{proof}
A direct consequence of Theorem~\ref{thm:kl_bound} is the following result on the overall complexity of generating one sample from $\minim$ with  accuracy $\ve$.
\begin{coro}\label{coro:kl}
Suppose that the assumptions of Theorem~\ref{thm:kl_bound} are satisfied.
Assume further that 
$$
0<\taue\leq\min\left\{\Big(\frac{\ve}{3C(\minim)\kappa}\Big)^{\frac{1}{\alpha}},1\right\}
$$ and
$K(\taue) \leq {\ve}/{3}\,$. Let the number of iterations $\nve$ be such that $\nve \geq { 3W_2^2 (\vr_0, \minim)}/{(2\ve \taue)} \,$. Then 
\begin{equation}\label{eq:kl_eps}\kl (\nu_{n_\varepsilon}^0 | \minim) \leq \ve \, .\end{equation}

Moreover,  for computing one sample  in terms of $\kl$ with precision $\ve$,  in a case of warm start ($W_2^2 (\vr_0, \minim)\leq C$ for some absolute constant $C$), we need  \begin{enumerate}[{\it (i)}]
    \item  $d^{\frac{q_V+1}{2}}\mathcal{O}(\ve^{-2})$ iterations, if $\alpha\geq 1$;
    \item  $d^{\frac{q_V+1}{2}}\mathcal{O}(\ve^{-1-\alpha^{-1}})$ iterations, if $\alpha<1$.
\end{enumerate} 
\end{coro}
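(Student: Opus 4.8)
The plan is to specialize Theorem~\ref{thm:kl_bound} to burn-in time $N=0$ and then check that, under the stated hypotheses on $\taue$ and $\nve$, each of the three error contributions is at most $\ve/3$. Setting $N=0$ gives $\vrN=\vr_0$ and $\vrNn=\vr_{\nve}$, so the theorem reads
\[
\kl(\nu_{\nve}^0 | \minim)\le \tfrac{1}{2\nve\taue}W_2^2(\vr_0,\minim)-\tfrac{1}{2\nve\taue}W_2^2(\vr_{\nve},\minim)+C(\minim)\kappa\taue^{\alpha}+K(\taue).
\]
Since $W_2^2\ge 0$, I would discard the (nonpositive) second summand and then estimate the three remaining ones separately. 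The choice $\nve\ge 3W_2^2(\vr_0,\minim)/(2\ve\taue)$ forces $\tfrac{1}{2\nve\taue}W_2^2(\vr_0,\minim)\le\ve/3$; the hypothesis $\taue\le(\ve/(3C(\minim)\kappa))^{1/\alpha}$ together with $\delta\le\kappa\taue^{1+\alpha}$ forces $C(\minim)\kappa\taue^{\alpha}\le\ve/3$; and the assumed $K(\taue)\le\ve/3$ handles the last term directly. Summing the three bounds yields \eqref{eq:kl_eps}.

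For the complexity count the idea is to take $\taue$ as large as the constraints permit, so that $\nve\sim 1/(\ve\taue)$ becomes as small as possible. Here I would invoke Remark~\ref{rem:ktau}: for $\taue\le 1$ the requirement $K(\taue)\le\ve/3$ is implied by $\taue\le \ve/(3C_{q_V}d^{\frac{q_V+1}{2}})$. Hence a legitimate choice is to set $\taue$ equal to the minimum of the two ceilings $c_1\ve^{1/\alpha}$ and $c_2\,\ve\,d^{-\frac{q_V+1}{2}}$ (and of $1$), for suitable constants $c_1,c_2>0$. With a constant $c>0$ absorbing these constants, this gives
\[
\frac{1}{\taue}\le c\,\max\bigl\{\ve^{-1/\alpha},\,d^{\frac{q_V+1}{2}}\ve^{-1}\bigr\}.
\]
In the warm-start regime $W_2^2(\vr_0,\minim)\le C$ one then has $\nve\sim C/(\ve\taue)$, and it only remains to resolve the maximum in the two ranges of $\alpha$.

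The last step, which I view as the only place requiring attention, is the elementary case analysis that turns the $\max$ into the advertised product, using $\ve\le 1$ and $d\ge 1$. If $\alpha\ge 1$ then $\ve^{-1/\alpha}\le\ve^{-1}\le d^{\frac{q_V+1}{2}}\ve^{-1}$, so the maximum equals $d^{\frac{q_V+1}{2}}\ve^{-1}$ and $\nve\sim d^{\frac{q_V+1}{2}}\mathcal{O}(\ve^{-2})$, which is case~(i). If $\alpha<1$ then $1/\alpha>1$, whence both $\ve^{-1/\alpha}$ and $d^{\frac{q_V+1}{2}}\ve^{-1}$ are dominated by $d^{\frac{q_V+1}{2}}\ve^{-1/\alpha}$, giving $\nve\sim d^{\frac{q_V+1}{2}}\mathcal{O}(\ve^{-1-\alpha^{-1}})$, which is case~(ii). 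The genuine obstacle is therefore bookkeeping rather than analysis: one must keep track of the interplay between the two ceilings on $\taue$ --- the proximal-accuracy bound of order $\ve^{1/\alpha}$ and the discretization bound of order $\ve\,d^{-\frac{q_V+1}{2}}$ coming from $K$ via Remark~\ref{rem:ktau} --- and verify that the stated uniform product bound dominates both ceilings in each regime of $\alpha$.
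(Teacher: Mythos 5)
Your proposal is correct and follows essentially the same route as the paper: the bound \eqref{eq:kl_eps} is read off from Theorem~\ref{thm:kl_bound} with $N=0$ by splitting the error into three contributions of at most $\ve/3$ each, and the complexity count comes from Remark~\ref{rem:ktau}, which turns $K(\taue)\leq\ve/3$ into the ceiling $\taue\lesssim \ve\, C_{q_V}^{-1}d^{-\frac{q_V+1}{2}}$, combined with the ceiling $\taue\lesssim\ve^{1/\alpha}$ from the proximal-error term. The paper's own proof states exactly this but leaves the $\ve/3$ bookkeeping and the final case analysis in $\alpha$ implicit; your write-up merely fills in those details.
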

\begin{proof}
    The inequality \eqref{eq:kl_eps} is a straightforward consequence of Theorem~\ref{thm:kl_bound}. To get estimates of computational complexity it is enough to observe that by Remark~\ref{rem:ktau} we have that condition $K(\taue) \leq \ve$ is satisfied if 
\begin{equation*}
0<\taue \leq \min \Big \{ {\ve}{C_{q_V}^{-1}} d^{-\frac{q_V+1}{2}},1\Big \} \, . 
\end{equation*} 
\end{proof}

Finally, let us show that, when the potential $V$ is globally $\lambda_V$-convex, we can refine our result by improving the convergence rates of IPLA. Based on results from \citet{BCEM} we have the following estimate on precision of IPLA in terms of Wasserstein distance.
\begin{theo}[Wasserstein error bound]\label{theo-main_conv}
Suppose that the potential $V$  satisfies  \textbf{(V)} for $R_V=0$ and $\lambda_V>0$, and that the initial measure $\vr_0$ satisfies ($\boldsymbol{\vr_0}$). Let $\tau < 1/ \lambda_V$, $\alpha\geq 0$, and $\delta\leq \kappa\tau^{1+\alpha}$. Then for all $k\in\N$ it holds
\begin{multline*}
W_2^2 (\vrk, \minim) \leq  2\big( 1 - \tfrac{\tau \lambda_V}{2} \big)^k W_2^2 (\vr_0, \minim)   + \tfrac{4}{\lambda_V} K(\tau)\\ + 2\kappa^2\tau^{2+2\alpha}\left(  \frac{1-\mathrm{e}^{-\lambda_V\tau (k-1)}}{1 - \mathrm{e}^{-\lambda_V\tau}} \right)^2\,.
\end{multline*} 
\end{theo}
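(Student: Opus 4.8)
The plan is to control $W_2^2(\vrk,\minim)$ by separating the exact Proximal Langevin dynamics from the accumulated proximal error, exploiting that $R_V=0$ forces $V$ to be globally $\lambda_V$-strongly convex. First I would introduce the error-free iterates $\hvrk$ started at $\vr_0$, namely $\hvrki=(\proxv)_\#\hvrk*g_\tau$ with $\hat\vr_0=\vr_0$. The triangle inequality for $W_2$ together with $(a+b)^2\le 2a^2+2b^2$ gives
\[
W_2^2(\vrk,\minim)\le 2\,W_2^2(\hvrk,\minim)+2\,W_2^2(\vrk,\hvrk).
\]
The first summand will produce the contraction term and the $\tfrac{4}{\lambda_V}K(\tau)$ term, while the second will produce the $\delta$-dependent geometric sum; this already explains the factor $2$ and the three summands in the statement.

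For the exact process I would apply Proposition~\ref{prop:main_inequality} with $\nu=\minim$ and $\delta=0$, so the $C(\nu)\delta$ term drops:
\[
2\tau\big(\cF[\hvrki]-\cF[\minim]\big)\le W_2^2(\hvrk,\minim)-W_2^2(\hvrki,\minim)+K(\tau)\tau.
\]
Since $\minim$ minimizes $\cF$ and $\cF=\cFV+\cFE$ is $\lambda_V$-geodesically convex (because $V$ is globally $\lambda_V$-convex when $R_V=0$), the quadratic-growth inequality at the minimizer — equivalently Talagrand's $T_2$ inequality under Bakry--\'Emery — yields $\cF[\hvrki]-\cF[\minim]=\kl(\hvrki|\minim)\ge \tfrac{\lambda_V}{2}W_2^2(\hvrki,\minim)$. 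Substituting and rearranging gives $(1+\tau\lambda_V)W_2^2(\hvrki,\minim)\le W_2^2(\hvrk,\minim)+K(\tau)\tau$, and since $\tau<1/\lambda_V$ implies $\tau\lambda_V\in(0,1)$ we may use $\tfrac{1}{1+x}\le 1-\tfrac{x}{2}$ to obtain the one-step recursion $W_2^2(\hvrki,\minim)\le(1-\tfrac{\tau\lambda_V}{2})W_2^2(\hvrk,\minim)+\tau K(\tau)$. Iterating and summing $\sum_{j\ge0}(1-\tfrac{\tau\lambda_V}{2})^j=\tfrac{2}{\tau\lambda_V}$ gives $W_2^2(\hvrk,\minim)\le(1-\tfrac{\tau\lambda_V}{2})^kW_2^2(\vr_0,\minim)+\tfrac{2}{\lambda_V}K(\tau)$, which after multiplication by $2$ is exactly the first two terms.

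For the error term I would estimate $W_2(\vrk,\hvrk)$ by a one-step recursion. Inserting the intermediate law $(\proxv)_\#\vrk*g_\tau$ and using the triangle inequality splits the increment into (i) the effect of convolving with the error law $\xi_\delta$, which is supported in $B(0,\delta)$ and hence perturbs $W_2$ by at most $\delta$ irrespective of the current law, and (ii) the propagation of the previous discrepancy through one exact step. Because convolution with the common Gaussian $g_\tau$ is nonexpansive and the proximal gradient-flow step for the $\lambda_V$-convex $\cFV$ contracts $W_2$ by the factor $e^{-\lambda_V\tau}$ (the $\lambda_V$-contraction of the $\cFV$-flow, cf. the referenced BCEM and AGS), one gets $W_2(\vrki,\hvrki)\le e^{-\lambda_V\tau}W_2(\vrk,\hvrk)+\delta$. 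Starting from $W_2(\vr_0,\hat\vr_0)=0$ and iterating produces a geometric sum with ratio $e^{-\lambda_V\tau}$; with the indexing of the injected errors this equals $\delta\,\tfrac{1-e^{-\lambda_V\tau(k-1)}}{1-e^{-\lambda_V\tau}}$. Squaring, inserting $\delta\le\kappa\tau^{1+\alpha}$, and multiplying by $2$ yields the last term.

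The hard part will be pinning down and reconciling the \emph{sharp} per-step rates: the entropic/splitting estimate only delivers the implicit-Euler rate $(1+\tau\lambda_V)^{-1}\le 1-\tfrac{\tau\lambda_V}{2}$ (used in the exact-process recursion), whereas the error analysis genuinely needs the gradient-flow rate $e^{-\lambda_V\tau}$ of the proximal map, and one must make sure the contraction actually invoked is at least this strong. The remaining care concerns justifying the Talagrand/quadratic-growth inequality for $\minim$, verifying that an error measure $\xi_\delta$ supported in $B(0,\delta)$ contributes at most $\delta$ in $W_2$ uniformly in the current iterate, and handling the off-by-one bookkeeping in the number of injected errors so that the geometric sum terminates precisely at $k-1$.
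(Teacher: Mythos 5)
Your architecture coincides with the paper's: the paper's proof likewise introduces the exact proximal iterates $\hvrk$ with $\hat\vr_0=\vr_0$, quotes the contraction bound $W_2^2(\hvrk,\minim)\le\big(1-\tfrac{\lambda_V\tau}{2}\big)^kW_2^2(\vr_0,\minim)+\tfrac{2}{\lambda_V}K(\tau)$ from the proof of Theorem~2 in \citet{BCEM}, and then treats the accumulated inexact-proximal error ``by the same lines'' as Theorem~6 there --- i.e.\ precisely your triangle-inequality split and geometric error sum. Your reconstruction of the exact-process half is correct and self-contained: applying Proposition~\ref{prop:main_inequality} with $\nu=\minim$ and $\delta=0$, combining it with $\cF[\hvrki]-\cF[\minim]=\kl(\hvrki|\minim)\ge\tfrac{\lambda_V}{2}W_2^2(\hvrki,\minim)$ (Bakry--\'Emery plus Talagrand, or quadratic growth of the $\lambda_V$-geodesically convex $\cF$ at its minimizer), and using $\tfrac{1}{1+x}\le 1-\tfrac{x}{2}$ on $(0,1]$, reproduces exactly the inequality the paper imports from BCEM. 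This part of your proposal is in fact more explicit than the paper's own proof.

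The gap sits in the error-propagation step, exactly where you located the ``hard part'' --- and it is not resolved, because the reconciliation you hope for is impossible as stated. The proximal map of a globally $\lambda_V$-convex function is Lipschitz in $W_2$ with constant $(1+\tau\lambda_V)^{-1}$, \emph{not} $e^{-\lambda_V\tau}$: for $V(x)=\tfrac{\lambda_V}{2}|x|^2$ one has $\proxv(x)=\tfrac{x}{1+\tau\lambda_V}$, and $(1+\tau\lambda_V)^{-1}>e^{-\lambda_V\tau}$ for every $\tau\lambda_V>0$, so the discrete step genuinely contracts less than the continuous flow whose rate $e^{-\lambda_V t}$ you invoke. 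Consequently the recursion you can actually justify is $W_2(\vrki,\hvrki)\le(1+\tau\lambda_V)^{-1}W_2(\vrk,\hvrk)+\delta$, and iterating it yields the geometric sum with ratio $(1+\tau\lambda_V)^{-1}$; since $(1+\tau\lambda_V)^{-j}\ge e^{-\lambda_V\tau j}$ term by term, this produces a strictly \emph{larger} third summand than the one claimed, namely $\tfrac{1-(1+\tau\lambda_V)^{-(k-1)}}{1-(1+\tau\lambda_V)^{-1}}$ in place of $\tfrac{1-e^{-\lambda_V\tau(k-1)}}{1-e^{-\lambda_V\tau}}$. So your argument proves a correct theorem of the same form and with the same asymptotics (both prefactors are $\Theta(1/(\tau\lambda_V))$, so Corollary~\ref{coro:was} is unaffected), but it does not establish the statement as written; matching the exponential ratio requires whatever distinct mechanism is used in the cited proof of Theorem~6 of \citet{BCEM}, which cannot be a per-step Lipschitz bound on $\proxv$. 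The off-by-one bookkeeping you flag (naively $k$ injected errors give exponent $k$, not $k-1$) is likewise left open, though it is minor compared with the rate issue.
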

This leads to the following complexity bounds.
\begin{coro}\label{coro:was}
 Suppose that the assumptions of Theorem~\ref{theo-main_conv} are satisfied. Assume further that \[\taue^{2\alpha}\leq \frac{\lambda_V^2\ve}{96\kappa^2\log^2(6 W_2^2 (\vr_0, \minim){\ve^{-1}})}\,\]and $K(\taue) \leq \tfrac{1}{12}\lambda_V\ve\,$. Let the number of iterations $\nve$ be such that \begin{multline*}
    2\log(6 W_2^2 (\vr_0, \minim){\ve^{-1}})\taue^{-1}\lambda_V^{-1} \leq \nve \\\leq4\log(6 W_2^2 (\vr_0, \minim){\ve^{-1}})\taue^{-1}\lambda_V^{-1}  \,.
\end{multline*} 
Then 
\[W_2^2 (\vr_{n_\varepsilon}, \minim) \leq \ve \, . 
\]
 Moreover, for computing one sample  in terms of the Wasserstein distance   with precision $\ve$,  in the case of warm start, up to logarithmic terms we need 
 \begin{enumerate}[{\it (i)}]
    \item  $d^{\frac{q_V+1}{2}}\mathcal{O}(\ve^{-2})$ iterations, if $\alpha\geq \tfrac{1}{2}$;
    \item  $d^{\frac{q_V+1}{2}}\mathcal{O}(\ve^{-\alpha^{-1}})$ iterations, if $\alpha<\tfrac{1}{2}$.
\end{enumerate} 
\end{coro}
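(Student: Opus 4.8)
The plan is to apply Theorem~\ref{theo-main_conv} at the single index $k=\nve$ and to show that, under the three standing hypotheses of the corollary (the smallness of $\taue^{2\alpha}$, the bound $K(\taue)\le\tfrac1{12}\lambda_V\ve$, and the two-sided window for $\nve$), each of the three summands on the right-hand side of the Theorem~\ref{theo-main_conv} estimate is at most $\ve/3$; adding them yields $W_2^2(\vr_{\nve},\minim)\le\ve$. Thus the whole argument is the bookkeeping of a ``$\tfrac\ve3+\tfrac\ve3+\tfrac\ve3$'' budget, with each third matched to exactly one hypothesis.

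For the first summand $2(1-\tfrac{\taue\lambda_V}2)^{\nve}W_2^2(\vr_0,\minim)$ I would use $1-\tfrac{\taue\lambda_V}2\le e^{-\taue\lambda_V/2}$ to obtain the exponential decay $2e^{-\nve\taue\lambda_V/2}W_2^2(\vr_0,\minim)$, and then feed in the \emph{lower} bound $\nve\ge 2\log(6W_2^2(\vr_0,\minim)\ve^{-1})\taue^{-1}\lambda_V^{-1}$, which makes the exponent at least $\log(6W_2^2(\vr_0,\minim)\ve^{-1})$ and hence bounds the summand by $\ve/3$. The second summand $\tfrac4{\lambda_V}K(\taue)$ is $\le\tfrac4{\lambda_V}\cdot\tfrac1{12}\lambda_V\ve=\ve/3$ directly from the assumed bound on $K(\taue)$. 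The third summand $2\kappa^2\taue^{2+2\alpha}\big(\tfrac{1-e^{-\lambda_V\taue(\nve-1)}}{1-e^{-\lambda_V\taue}}\big)^2$ is the delicate one: here I would recognize the quotient as the partial geometric sum $\sum_{j=0}^{\nve-2}e^{-\lambda_V\taue j}$, bound it crudely by its number of terms $\nve-1\le\nve$, and then substitute the \emph{upper} bound $\nve\le4\log(6W_2^2(\vr_0,\minim)\ve^{-1})\taue^{-1}\lambda_V^{-1}$. This turns the summand into $32\kappa^2\log^2(6W_2^2(\vr_0,\minim)\ve^{-1})\lambda_V^{-2}\taue^{2\alpha}$, which is precisely $\le\ve/3$ by the hypothesis $\taue^{2\alpha}\le\lambda_V^2\ve(96\kappa^2\log^2(6W_2^2(\vr_0,\minim)\ve^{-1}))^{-1}$ (the constant $96=3\cdot32$ is exactly what is needed).

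Finally, for the complexity count I would combine the two smallness requirements on the stepsize: the accumulation hypothesis forces $\taue\lesssim\ve^{1/(2\alpha)}$ up to logarithmic and $\lambda_V$ factors, while $K(\taue)\le\tfrac1{12}\lambda_V\ve$ together with Remark~\ref{rem:ktau} ($K(\tau)\le C_{q_V}\tau d^{\frac{q_V+1}2}$) forces $\taue\lesssim\ve\,d^{-\frac{q_V+1}2}$. Taking $\taue$ to be the minimum of the two and inserting it into the admissible window $\nve\asymp\log(\ve^{-1})\,\taue^{-1}\lambda_V^{-1}$ gives $\nve\lesssim\log(\ve^{-1})\,\lambda_V^{-1}\max\{\ve^{-1/(2\alpha)},\,\ve^{-1}d^{\frac{q_V+1}2}\}$; bounding the maximum by the product of its two arguments (each at least one) yields $\nve\lesssim\log(\ve^{-1})\,\lambda_V^{-1}\ve^{-1-\frac1{2\alpha}}d^{\frac{q_V+1}2}$, whence $\ve^{-1-\frac1{2\alpha}}\le\ve^{-2}$ for $\alpha\ge\tfrac12$ and $\ve^{-1-\frac1{2\alpha}}\le\ve^{-1/\alpha}$ for $\alpha<\tfrac12$ produce the two stated regimes. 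The main obstacle is the third summand: its bound is self-referential in $\nve$, so the argument only closes because the prescribed window pins $\nve$ from \emph{both} sides at once — the lower bound being spent on the contraction term and the upper bound on the accumulated-error term — and one must verify that a single admissible pair $(\taue,\nve)$ can meet all three budgets simultaneously.
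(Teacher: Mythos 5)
Your proposal is correct and follows essentially the same route as the paper: the paper's entire proof consists precisely of the two elementary bounds you use, namely $\big(1-\tfrac{\taue\lambda_V}{2}\big)^{\nve}\leq \exp\!\big(-\nve\tfrac{\taue\lambda_V}{2}\big)$ and the geometric-sum bound $\tfrac{1-\mathrm{e}^{-\lambda_V\taue(\nve-1)}}{1-\mathrm{e}^{-\lambda_V\taue}}\leq \nve$, followed by plugging into Theorem~\ref{theo-main_conv}, which is exactly the $\tfrac{\ve}{3}+\tfrac{\ve}{3}+\tfrac{\ve}{3}$ budget you carry out (with the lower bound on $\nve$ spent on the contraction term and the upper bound on the accumulated-error term). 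The iteration-count bookkeeping, which the paper leaves implicit, you settle validly via Remark~\ref{rem:ktau} and the bound of the maximum by the product of its (at least $1$) arguments, so nothing is missing.
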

\begin{proof}
Let us note that 
\[
\big( 1 - \tfrac{\tau \lambda_V}{2} \big)^{\nve}\leq\exp(-\nve\tfrac{\tau \lambda_V}{2})\text{ and } \frac{1-\mathrm{e}^{-\lambda_V\tau (\nve-1)}}{1 - \mathrm{e}^{-\lambda_V\tau}} \leq \nve\,.
\]
With the above bounds, the result follows directly from Theorem~\ref{theo-main_conv}.
\end{proof}

Note that if in Corollary~\ref{coro:was}  the stepsize $\tau_\ve=\tau$ is fixed, then $W_2^2 (\vr_{n}, \minim) \leq \ve$ for all sufficiently large $n$.

\begin{rem} The assumption of global $\lambda_V$-convexity of $V$ decreases the computational cost of a single iteration of IPLA. We can approximate the proximal step with smaller precision and keep the complexity of the whole algorithm of the order $\ve^{-2}$. Indeed,  as a~consequence of $\lambda_V$-convexity of the functional $\cFE$ we get \begin{enumerate}[{\it (i)}]
    \item  in Theorem~\ref{thm:kl_bound} -- under assumption {\bf{(V)}} with $R_V>0$ --  to achieve complexity of order $\ve^{-2}$ we need $\delta\leq \kappa\tau^{2}$;
    \item  in Theorem~\ref{theo-main_conv} -- in the strongly convex case -- the order $\ve^{-2}$ we have for $\delta\leq \kappa\tau^{3/2}$.\end{enumerate} 
\end{rem}
\begin{rem}
The complexity of IPLA, as shown in Corollaries~\ref{coro:kl} and \ref{coro:was}, depends on the number of iterations requiring additional computations. Since the optimized function is strongly convex, the cost of one iteration with precision $\delta$ is $\mathcal{O}(d\log(\delta))$, as detailed in Appendix~E.\end{rem}
\section{Experiments}
In this section, we demonstrate the application of IPLA on $3$ examples implemented in Python. We analyze the convergence rates and bias of the algorithm compared to the two known related LMC algorithms, namely TULA (Tamed Unadjusted Langevin Algorithm by \citet{BDMS}) and ULA (Unadjusted Langevin Algorithm by \citet{DM19}). The codes of our simulations can be found at \texttt{https://github.com/192459/lmc-beyond\--lip\-schitz-\-gradient\--continuity}.\\ See Appendix~D for additional results.

We shall compare the relative error (RE) 
and the coefficient of variance (CV) of various methods. In this context, the RE of the estimator $\widehat\theta$ of the true value $\theta$ is $\left|\frac{\widehat\theta-\theta}{\theta}\right|$, while CV is $\frac{{\rm sd}(\widehat\theta)}{\theta}$, where ${\rm sd}$ stands for the standard deviation.

\subsection{Example 1: Distribution with Light Tails}
Let us start with a simple and natural case where the potential has non-Lipschitz gradient.
Our goal is to sample from the density 
\begin{equation*}
\minim (x) \propto \exp{\big(-\tfrac{|x|^4}{4} \big)} \, ,
\end{equation*}
which is a~stationary distribution of the process 
\begin{equation*}
\d Y_t = -Y_t^3 \, \d t + \sqrt{2} \, \d B_t \,.
\end{equation*}
We estimate the moments $\Ex |Y|^2$, $\Ex |Y|^4$, and $\Ex |Y|^6$ in dimension $d=10^3$. For error analysis as true values of the estimated quantities, we use the Metropolis--Hastings algorithm with $10^7$ iterations. Note that $\nabla V$ is not globally Lipschitz and $V$ is $1$-convex outside a ball, so the assumptions on the convergence of ULA are not satisfied, but the convergence of TULA and IPLA is ensured. 

 We have run $10^5$ samples with 
 burn-in time $10^4$. We considered two scenarios. An initial value in the first of them is $x_0 = 7 \times \boldsymbol{1}_d$ (start in tail), while in the second one considered  we start at $x_0=0$ (start in the minimizer of $V$). Each experiment has been repeated $100$ times.  In Table~\ref{table:light_tails} we compare the relative error and the coefficient of variance between the  algorithms. We can see that IPLA has the smallest RE and its~CV is comparable to ULA's.
\begin{table}[ht] 
\centering
\begin{tabular}{llllll} \toprule
\multirow{2}{*}{Mom.}& \multirow{2}{*}{Method} & \multicolumn{2}{c}{Start in tail} & \multicolumn{2}{c}{Start in $x_0=0$} \\
 &  & RE & CV & RE & CV \\ \midrule
\multirow{3}{*}{$\Ex |Y|^2$} & IPLA & $0.0027$ & $0.0019$ & $0.0006$ & $0.0018$\\
& TULA & $0.0047$ & $0.0016$ & $0.0030$ & $0.0019$\\
& ULA & \texttt{NaN} & \texttt{NaN} & $0.0020$ & $0.0018$ \\
 \midrule
\multirow{3}{*}{$\Ex |Y|^4$} & IPLA & $0.0054$ & $0.0039$ & $0.0025$ & $0.0036$\\
 & TULA & $0.0095$ & $0.0032$ & $0.0073$ & $0.0039$\\
 & ULA & \texttt{NaN}& \texttt{NaN} & $0.0028$ & $0.0035$ \\
 \midrule
 \multirow{3}{*}{$\Ex |Y|^6$}  & IPLA & $0.0081$ & $0.0058$ & $0.0047$ & $0.0054$\\
 & TULA & $0.0144$ & $0.0047$ & $0.0120$ & $0.0058$\\
 & ULA & \texttt{NaN} & \texttt{NaN} & $0.0032$ & $0.0053$\\
\bottomrule
\end{tabular}
\caption{Estimation of the moments of light tails distribution from Example 1.}
\label{table:light_tails}
\end{table}

In Figure~\ref{fig:trajectories} we compare the trajectories of algorithms for the selected coordinate ($i=1$).  We see that ULA blows up in a few iterations, which results from the fact that $\nabla V$ is not globally Lipschitz.  To suppress this phenomenon, TULA adjusts stepsizes according to normalized $\nabla V$, namely to  $\frac{\nabla V}{1+|\nabla V|}$. This, on the other hand, results in unnecessarily small stepsizes in the tails. This can be seen in Figure~\ref{fig:trajectories}. As shown on Figure~\ref{fig:time_step} IPLA is less sensitive to the choice of stepsizes $\tau$ than TULA.

\begin{figure}[ht!]
\centering
\begin{subfigure}{0.233\textwidth}
\includegraphics[width=\textwidth]{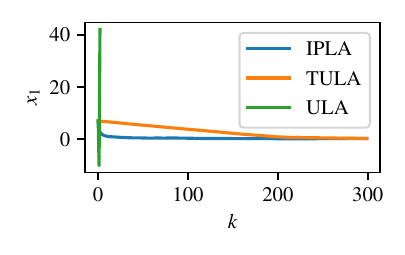}
\caption{}
\end{subfigure}
\begin{subfigure}{0.233\textwidth}
\includegraphics[width=\textwidth]{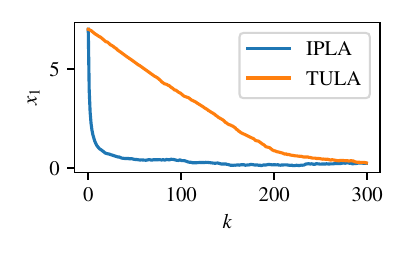}
\caption{}
\end{subfigure}
\caption{Trajectory of the $1^{\mathrm{st}}$ coordinate from Example 1 starting in a tail.  Both plots are based on the same data.}
\label{fig:trajectories}
\end{figure}

\begin{figure}[ht!]
\begin{subfigure}{0.233\textwidth}
    \includegraphics[width=\textwidth]{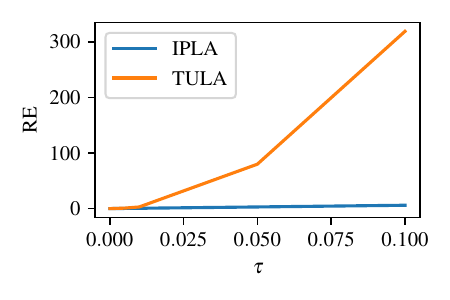}
    \caption{}
\end{subfigure}
\begin{subfigure}{0.233\textwidth}
    \includegraphics[width=\textwidth]{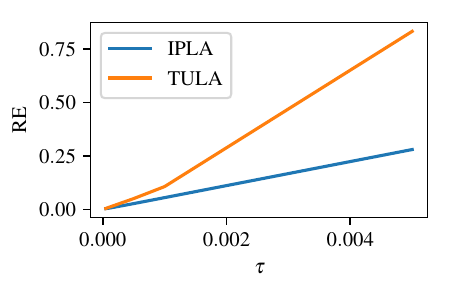}
    \caption{}
\end{subfigure} 
\caption{Dependence RE of IPLA and TULA (for Example~1) on a stepsize $\tau$ starting in a tail.}
\label{fig:time_step}
\end{figure}

\subsection{Example 2: Ginzburg--Landau Model}
We pass to a more complicated, but physically relevant example. 
Ginzburg--Landau model was introduced to describe phase transitions in physics, i.e., superconductivity, see \citet[Chapter 5]{goldenfeld}. 
It involves the potential 
\begin{equation*}
V(x) = \sum_{i,j,k = 1}^{q} \tfrac{1-\upsilon}{2} x_{ijk}^{2} + \tfrac{\upsilon \vk}{2} |\widetilde{\nabla} x_{ijk}|^2 + \tfrac{\upsilon \varsigma}{4} x_{ijk}^{4} \, ,
\end{equation*}
where $
    \widetilde{\nabla} x_{ijk} = \big( x_{i+jk} - x_{ijk}, x_{ij+k} - x_{ijk} ,  x_{ijk+} - x_{ijk}\big) \, ,$
with the notation such that 
$i_{\pm} := i \pm 1 \,{\rm mod}\, q$ and similarly for $j,k$. We consider $\vk= 0.1$, $\varsigma=0.5$, $\upsilon=2$ and $q=5$. The dimension in this example is $d = q^3 = 125$. Note that $\nabla V$ is not globally Lipschitz and $V$ is $\lambda_V$-convex for $\lambda_V>0$, \citep{BDMS}. We have run  $2 \times 10^4$ samples with burn-in time $10^4$.  We consider two scenarios as in Example~1. As starting in tail, we consider the case $x_0 = (100,0,\ldots, 0)$ and starting in the minimizer we take $x_0 = 0$. Each experiment has been repeated $100$ times. \begin{table}[ht!]
\centering
\begin{tabular}{llllll} \toprule
\multirow{2}{*}{Mom.}& \multirow{2}{*}{Method} & \multicolumn{2}{c}{Start in tail} & \multicolumn{2}{c}{Start in $x_0=0$} \\
 &  & RE & CV & RE & CV \\ \midrule
\multirow{3}{*}{$\Ex |Y|^2$} & IPLA & $0.0025$ & $0.0244$ & $0.0748$ & $0.0786$\\
& TULA & $0.0067$ & $0.0213$ & $0.0859$ & $0.0739$\\
& ULA & \texttt{NaN} & \texttt{NaN} & $0.0727$ & $0.0697$ \\
 \midrule
\multirow{3}{*}{$\Ex |Y|^4$} & IPLA & $0.0053$ & $0.0491$ & $0.1425$ & $0.1558$\\
 & TULA & $0.0134$ & $0.0425$ & $0.1635$ & $0.1473$\\
 & ULA & \texttt{NaN}& \texttt{NaN} & $0.1385$ & $0.1399$ \\
 \midrule
 \multirow{3}{*}{$\Ex |Y|^6$}  & IPLA & $0.0083$ & $0.0742$ & $0.2040$ & $0.2323$\\
 & TULA & $0.0199$ & $0.0638$ & $0.2337$ & $0.2208$\\
 & ULA & \texttt{NaN} & \texttt{NaN} & $0.1980$ & $0.2117$\\
\bottomrule
\end{tabular}
\caption{Estimation of moments of Ginzburg--Landau model from Example~2. }
\label{table:ginzburg_landau}
\end{table}  From Table~\ref{table:ginzburg_landau} we see that the results are analogous to those of Example~1. In the scenario starting in the minimizer, we have results similar to ULA's and better than TULA's. In the case of the start in tail, ULA blows up.

\begin{figure*}[ht]
\centering
\begin{subfigure}{0.23\textwidth}
\includegraphics[width=\textwidth]{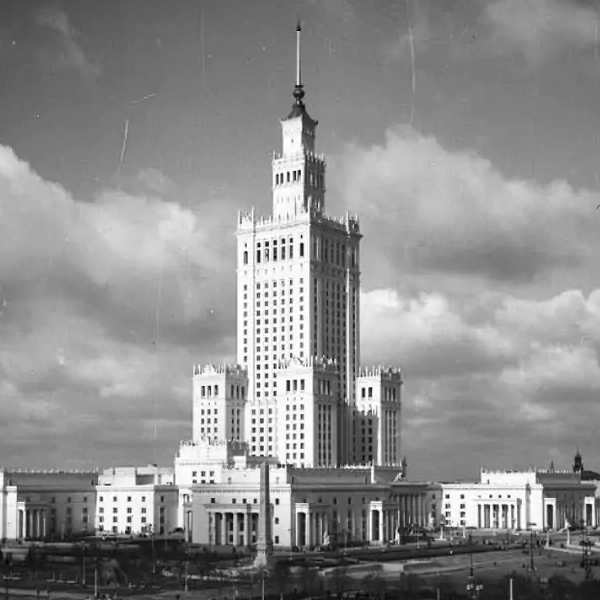}
\caption{Original image}
\end{subfigure}
\hspace{2mm}
\begin{subfigure}{0.23\textwidth}
\includegraphics[width=\textwidth]{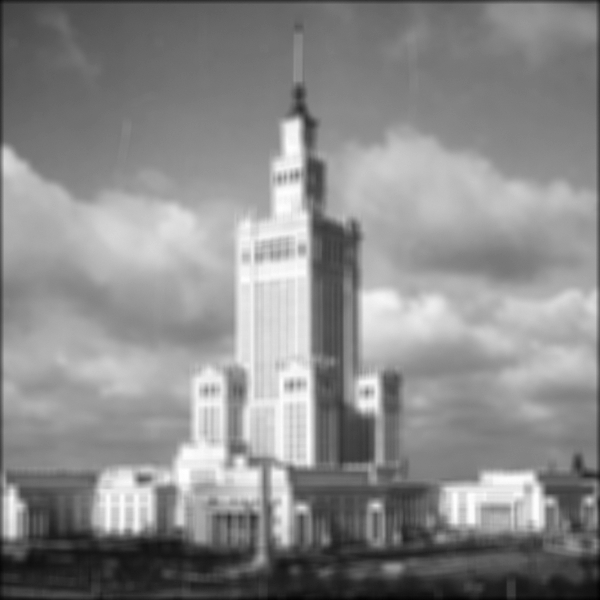}
\caption{Blurred with additive noise}
\end{subfigure}
\hspace{2mm}
\begin{subfigure}{0.23\textwidth}
\includegraphics[width=\textwidth]{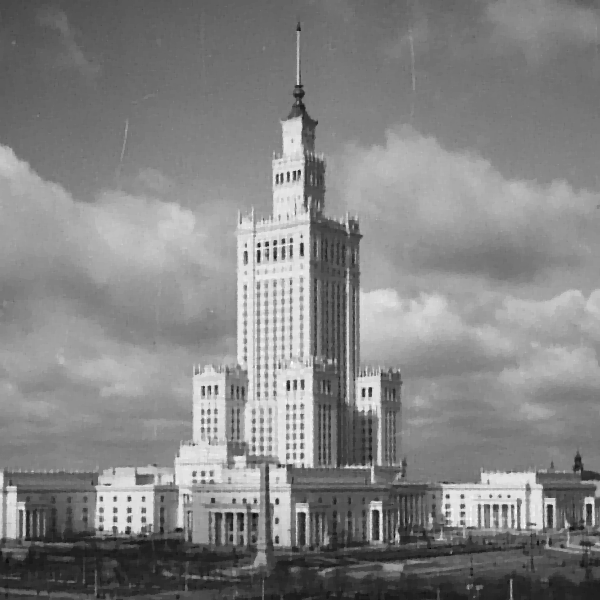}
\caption{Processed $(\beta = 0.03)$}
\end{subfigure}
\hspace{2mm}
\begin{subfigure}{0.23\textwidth}
\includegraphics[width=\textwidth]{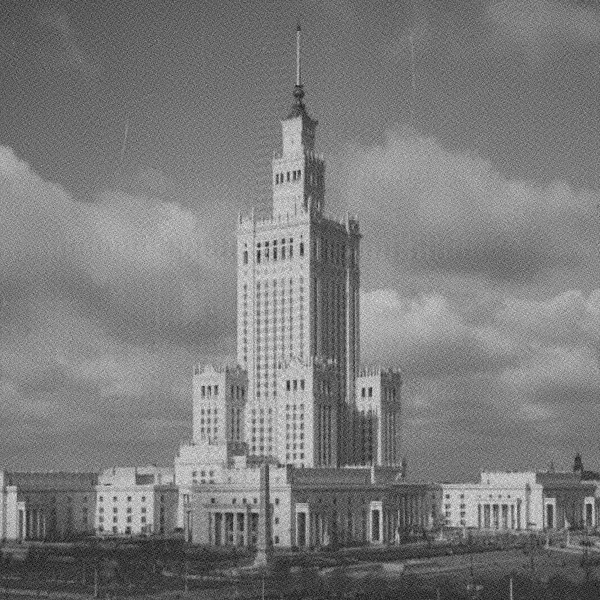}
\caption{Processed $(\beta = 0.0001)$}
\end{subfigure}
\caption{Result of the Bayesian Image Denoising from Example~3. The original photo by Zbyszko Siemaszko 1955-56. }
\label{fig:palace}
\end{figure*}
\subsection{Example 3: Bayesian Image Deconvolution}
We show now an example that IPLA is effective in complex high-dimensional scenarios. We consider {the Bayesian Image Deconvolution} problem inspired by Experiment 1 in \citet[Subsection 4.1.2]{durmus_bayesian_efficient}. 
Given a high-resolution grayscale image $x \in \Rd$ such that $d = n \times n=600 \times 600$ pixels from Figure~\ref{fig:palace}~(a),  the goal of the problem is to recover image $x$ from a blurred image. Namely, we are to sample from posterior density $p(x|y)$ of blurred image with additive noise (observation) such that 
\begin{equation*}
y = Hx + W \ \textup{ for }\ W \sim \mathcal{N} (0, \sigma^2 \id) \, ,
\end{equation*}
where $H$ is a circulant blur matrix representing 2D discrete convolution and $W$ is an additive noise. 

We need to define 2D discrete isotropic total variation via
\begin{flalign*}
TV(x) := &\sum_{i,j = 1}^{n-1} \sqrt{|x_{in, j+1}-x_{in, j}|^2+|x_{(i+1)n, j}-x_{in, j}|^2} \\
 + &\sum_{j = 1}^{n-1}|x_{nd, j+1}-x_{nd, j}| + \sum_{i = 1}^{n-1}|x_{(i+1)n, n}-x_{in, n}| \,,
\end{flalign*} which was proposed  in the context of image processing by \citet{ROC_model}. Note that $TV$ is a~discretization of the standard total variation norm, cf. \citet[Section 2]{chambonelle_tv}.

We choose prior distribution $p(x) \propto \exp (-\beta \, TV(x))$, where $\beta > 0$ is  a regularizing factor. Henceforth, we provide sampling from the posterior distribution $\minim:=p(x|y)$ satisfying 
\begin{equation*}
\minim  \propto \exp \Big(-\tfrac{1}{2\sigma^2}|y - Hx|^2 - \beta \, TV(x) \Big) \, .
\end{equation*}
Even if $TV$ is not smooth, and consequently the theoretical convergence rates are not known for IPLA, TULA, or ULA, we show the performance of our algorithm. It might be expected that the inexact proximal step could be non-trivial in this case.  Nonetheless, it can be effectively solved with the use of the Adaptive Primal--Dual Hybrid Algorithm by \citet[Algorithm 1]{goldstein}. 

Our results are presented on Figure~\ref{fig:palace}. On (b) we show blurred and noisy image $y$, where uniform circular blur matrix has depth $9$ and additive noise is of the standard deviation $\sigma=0.5$. On (c) and (d) we present posterior mean obtained by IPLA for $10^5$ iterations and with the desired precision of the inexact proximal step $\delta = 10^{-1}$.

\subsection{Effective implementation of IPLA}
 
 Despite involving the inexact proximal step,
 the computational complexity of IPLA is controlled. In Examples~1 and~2, we used the Newton conjugate gradient method with the provided analytically computed gradient and the Hessian matrix, making use of the sparsity of the Hessian matrix.  Using the standard Python library \texttt{SciPy} \citep{2020SciPy-NMeth},   IPLA has comparable performance to ULA. We write the time of computations on standard Macbook Air M1 2020. In Example~1 a~single run of $10^6$ iterations took from $5$ to $20 \, \mathrm{s}$  for ULA or TULA and from $15$ to $60 \, \mathrm{s}$, while in Example~2, all algorithms took from $3$ to $5 \, \mathrm{min}$ to run $10^4$ iterations.  In Example~3, the potential is not differentiable and we modify the existing implementation in the \texttt{PyProximal} library \citep{Ravasi2024}. Setting the initial value in the minimizer of $V$ reduces the cost of the inexact proximal step. Let us stress that in our simulation of Example~3, $10^5$ samples in the dimension $d=360\,000$ are obtained in less than $1\, \mathrm{min}$.

\subsection{Conclusion}
IPLA broadens the scope of problems that known LMC algorithms can address keeping low computational cost.
 
\bibliography{bib}
\section*{Acknowledgements}
I.C. is supported by NCN grant 2019/34/E/ST1/00120, B.M. is supported by NCN grant
2018/31/B/ST1/00253. The authors are grateful to Adam Chlebicki-Miasojedow and Sonja Letnes for close assistance in the progress of this research.
\newpage

\onecolumn
\section*{Appendix}
\renewcommand\thesection{\Alph{section}}
\setcounter{section}{0}
\setcounter{coro}{0}

\section{Auxiliary results}

The next lemma is crucial to deal with functions strongly convex only outside some ball.
\begin{lem}\label{lem:prox-contracts-1} Let $V$ be a function convex and differentiable on $\R^d$ and $\lambda_V$-convex outside a given ball $B_V$ for $\lambda_V\geq 0$ that attains its minimum at $x^*=0$. Then, for any $\tau>0$ and $x,z\in\R^d$, it holds
\begin{equation*}
2\tau\big(V(\proxv(x))-V(z)\big)\leq |x-z|^2-|x-\proxv(x)|^2-(1+\tau\lambda_V\mathds{1}_{\R^d\setminus B}(\proxv(x)))|\proxv(x)-z|\,.
\end{equation*}
Moreover,
for any $\tau>0$ and $x\in\R^d$ we have
    \[
   \left(1+\tau\lambda_V\mathds{1}_{\R^d\setminus B}(\proxv(x))\right) |\proxv(x)|^2\leq |x|^2\,.
    \]
\end{lem}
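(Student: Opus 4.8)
The plan is to reduce both assertions to the first-order optimality condition for the proximal minimization combined with the $\lambda_V$-convexity of $V$ outside $B$. Throughout write $p:=\proxv(x)$. Since $p$ minimizes $y\mapsto V(y)+\tfrac{1}{2\tau}|y-x|^2$ on $\Rd$ and $V$ is differentiable, the stationarity condition reads
\[
\nabla V(p)=\tfrac{1}{\tau}(x-p)\,.
\]

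For the first inequality I would invoke $\lambda_V$-convexity outside $B$ with base point $y=p$ and test point $z$, namely
\[
V(z)\geq V(p)+\nabla V(p)\cdot(z-p)+\tfrac{\lambda_V}{2}\mathds{1}_{\Rd\setminus B}(p)\,|z-p|^2\,,
\]
then rearrange, substitute $\nabla V(p)=\tfrac{1}{\tau}(x-p)$, and multiply by $2\tau$ to obtain
\[
2\tau\big(V(p)-V(z)\big)\leq -2(x-p)\cdot(z-p)-\tau\lambda_V\mathds{1}_{\Rd\setminus B}(p)\,|z-p|^2\,.
\]
Finally I would apply the three-point identity $-2(x-p)\cdot(z-p)=|x-z|^2-|x-p|^2-|z-p|^2$, which converts the inner product into squared norms; collecting the two $|z-p|^2$ contributions produces precisely the factor $\big(1+\tau\lambda_V\mathds{1}_{\Rd\setminus B}(p)\big)$ and yields the stated bound (with $|\proxv(x)-z|^2$, i.e.\ the squared norm, on the right-hand side).

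For the second inequality I would specialize the first one to the minimizer $z=x^*=0$. Since $0$ minimizes $V$, we have $V(p)-V(0)\geq 0$, so the left-hand side is nonnegative, and the first inequality then forces
\[
\big(1+\tau\lambda_V\mathds{1}_{\Rd\setminus B}(p)\big)|p|^2\leq |x|^2-|x-p|^2\,.
\]
Discarding the nonnegative term $|x-p|^2$ gives the claim.

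The computation is essentially mechanical. The only points demanding care are recognizing that the convexity modulus must be evaluated at the proximal point $p$ itself -- which is exactly what generates the indicator $\mathds{1}_{\Rd\setminus B}(\proxv(x))$ appearing in the statement -- and the bookkeeping of the two $|z-p|^2$ terms when they are merged. Once the optimality condition and the polarization identity are in hand, there is no genuine analytic obstacle; the strong convexity only ever enters through the single point $p$, so no global strong convexity of $V$ is needed.
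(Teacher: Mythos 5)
Your proof is correct and follows essentially the same route as the paper: the stationarity condition $\nabla V(p)=\tfrac{1}{\tau}(x-p)$, the $\lambda_V$-convexity inequality based at the proximal point (which is what puts the indicator at $\proxv(x)$), and the three-point identity, with the second claim obtained by taking $z=x^*=0$ and using $V(p)\geq V(0)$. You also correctly note that the right-hand side of the first inequality should carry $|\proxv(x)-z|^2$ (the square is missing in the statement), and you make explicit the derivation of the ``Moreover'' part that the paper leaves implicit.
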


\begin{proof}
We denote $y=\proxv (x)$. Due to the $\lambda_V$-convexity of $V$ we know that 
\[
V(y)-V(z)\leq \nabla V(y) \cdot (y-z)-\tfrac{\lambda_V}{2}\mathds{1}_{\R^d\setminus B_V}(y)|y-z|^2\,.
\]
By the definition of $\proxv$ it follows that $\nabla V(y)=\tfrac{1}{\tau}(x-y)$. In turn
\[
2\tau\big(V(y)-V(z)\big)\leq 2(x-y)\cdot (y-z)-\tau\lambda_V\mathds{1}_{\R^d\setminus B_V}(y)|y-z|^2\,.
\]
To complete the result it suffices to note that $
|x-z|^2=|x-y|^2+|y-z|^2+2(x-y)\cdot(y-z)
$.
\end{proof}

To provide bounds on moments for a discrete-in-time Markov Chain defined by Algorithm~\ref{alg:upl} we will need the following auxiliary lemmas.
\begin{lem}\label{lem:moment_third}
Let $V$ be a function convex and differentiable on $\R^d$ and $\lambda_V$-convex outside a given ball $B_V := \{ x :\ |x| \leq R_V \}$ for $\lambda_V\geq 0$ and $m \geq 0$. Then for any $k\in\N$ and $\tau>0$ we have
\begin{equation*}
\Ex |X_{k+\frac{1}{3}}|^m \leq \big(\tfrac{1}{1+\tau \lambda_V} \big)^{\frac{m}{2}} \Ex |X_k|^m +  \tfrac{(1+\tau \lambda_V)^{\frac{m}{2}}-1}{(1+\tau \lambda_V)^{\frac{m}{2}}} \, R_V^m \, .
\end{equation*}
\end{lem}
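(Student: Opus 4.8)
The plan is to bound the $m$-th moment of the exact proximal step $X_{k+\frac13}=\proxv(X_k)$ using the second inequality of Lemma~\ref{lem:prox-contracts-1}, which is exactly tailored for this purpose. The key structural observation is that the indicator $\mathds{1}_{\R^d\setminus B_V}(\proxv(X_k))$ partitions the event space into two regimes: when the proximal point lands outside the ball $B_V$, we gain the full contraction factor $(1+\tau\lambda_V)$; when it lands inside, we only know $|\proxv(X_k)|\leq R_V$ trivially. The goal is to combine these into a single clean bound with the claimed contraction coefficient $(1+\tau\lambda_V)^{-m/2}$.

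First I would set $Y:=\proxv(X_k)$ and invoke Lemma~\ref{lem:prox-contracts-1}, which gives, pointwise in $x$,
\[
\left(1+\tau\lambda_V\mathds{1}_{\R^d\setminus B_V}(Y)\right)|Y|^2\leq |X_k|^2\,.
\]
I would then split according to whether $Y$ lies inside or outside $B_V$. On the event $\{Y\notin B_V\}$ the inequality reads $|Y|^2\leq (1+\tau\lambda_V)^{-1}|X_k|^2$, so raising to the power $m/2$ yields $|Y|^m\leq (1+\tau\lambda_V)^{-m/2}|X_k|^m$. On the complementary event $\{Y\in B_V\}$ we simply use $|Y|\leq R_V$, hence $|Y|^m\leq R_V^m$. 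The natural way to unify these is to write, for every outcome,
\[
|Y|^m\leq (1+\tau\lambda_V)^{-m/2}|X_k|^m\,\mathds{1}_{\R^d\setminus B_V}(Y)+R_V^m\,\mathds{1}_{B_V}(Y)\,.
\]

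To recover the exact form of the statement, I would add and subtract the contracting term on the inner event: bound $R_V^m\mathds{1}_{B_V}(Y)$ above by $R_V^m\bigl(1-(1+\tau\lambda_V)^{-m/2}\bigr)+(1+\tau\lambda_V)^{-m/2}R_V^m\mathds{1}_{B_V}(Y)$, and since $|Y|\le R_V$ on $\{Y\in B_V\}$, the stray term $(1+\tau\lambda_V)^{-m/2}R_V^m\mathds{1}_{B_V}(Y)$ dominates $(1+\tau\lambda_V)^{-m/2}|Y|^m\mathds{1}_{B_V}(Y)$. This lets me absorb the inner-event contribution into the global contraction term, leaving precisely the additive constant $R_V^m\bigl(1-(1+\tau\lambda_V)^{-m/2}\bigr)$, which equals $\tfrac{(1+\tau\lambda_V)^{m/2}-1}{(1+\tau\lambda_V)^{m/2}}R_V^m$. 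Taking expectations throughout and noting $\mathds{1}_{\R^d\setminus B_V}(Y)\leq 1$ finishes the argument.

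\emph{The main obstacle} I anticipate is the case-splitting bookkeeping: getting the two regimes to combine into a single inequality with the stated constant requires the slightly delicate step of replacing the genuine $|Y|^m$ inside the ball by the crude upper bound $R_V^m$ \emph{before} distributing the contraction factor, so that both events carry the same coefficient $(1+\tau\lambda_V)^{-m/2}$. Care is also needed because $m$ need not be an integer, so I would keep everything in terms of the $m/2$ power of a nonnegative quantity rather than manipulating $|Y|^m$ directly; monotonicity of $t\mapsto t^{m/2}$ on $[0,\infty)$ is all that is required and holds for every $m\geq 0$.
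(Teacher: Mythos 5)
Your decomposition (split on whether $\proxv(X_k)$ lands inside or outside $B_V$, Lemma~\ref{lem:prox-contracts-1} outside, $|Y|\le R_V$ inside) is the same as the paper's, but the ``absorption'' step that is supposed to produce the final constant is a genuine gap. Write $c:=(1+\tau\lambda_V)^{-m/2}$ and $Y:=\proxv(X_k)$. After your add-and-subtract manipulation you are left with the pointwise bound
\[
|Y|^m\leq c\,|X_k|^m\,\mathds{1}_{\R^d\setminus B_V}(Y)+c\,R_V^m\,\mathds{1}_{B_V}(Y)+(1-c)\,R_V^m\,,
\]
and to reach the statement you must show $c\,|X_k|^m\,\mathds{1}_{\R^d\setminus B_V}(Y)+c\,R_V^m\,\mathds{1}_{B_V}(Y)\leq c\,|X_k|^m$. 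Your justification --- that the stray term $c\,R_V^m\,\mathds{1}_{B_V}(Y)$ \emph{dominates} $c\,|Y|^m\,\mathds{1}_{B_V}(Y)$ --- goes in the wrong direction: you need an \emph{upper} bound on the stray term, and the fact that it dominates a smaller quantity provides none. Pointwise, the absorption would require $R_V\leq|X_k|$ on the event $\{Y\in B_V\}$, which is false in general: take $X_k=0$ with $R_V>0$, so that $Y=\proxv(0)=0\in B_V$; then the left-hand side equals $c\,R_V^m>0$ while the right-hand side is $0$. As written, your chain only yields the weaker estimate $\Ex|Y|^m\leq c\,\Ex|X_k|^m+R_V^m$, i.e.\ with additive constant $R_V^m$ instead of the claimed $\tfrac{(1+\tau\lambda_V)^{m/2}-1}{(1+\tau\lambda_V)^{m/2}}R_V^m$.

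The missing ingredient is that on the inner event you must not discard the comparison with $|X_k|$: there the very inequality of Lemma~\ref{lem:prox-contracts-1} that you quote reads $|Y|^2\leq|X_k|^2$ (the indicator vanishes), which is the non-expansiveness $|\proxv(x)|\leq|x|$ that the paper invokes (valid because $\proxv(0)=0$ when the minimum of $V$ is at the origin). With it, on $\{Y\in B_V\}$ split $|Y|^m=c\,|Y|^m+(1-c)\,|Y|^m\leq c\,|X_k|^m+(1-c)\,R_V^m$, bounding the first copy via $|Y|\leq|X_k|$ and the second via $|Y|\leq R_V$. Since the same right-hand side also dominates the outer-event bound $c\,|X_k|^m$, you get the pointwise inequality $|Y|^m\leq c\,|X_k|^m+(1-c)\,R_V^m$ on all of $\R^d$, and taking expectations gives the lemma. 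This repaired argument is exactly the paper's proof, phrased there as splitting the integral over $A=\{x:\ \proxv(x)\in B_V\}$ and its complement and writing $1=c+(1-c)$ inside $A$.
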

\begin{proof} For $m=0$ there is nothing to prove. Suppose $m>0$. Since $\vrkt = (\proxvk)_{\#} \vrk$, we have
\begin{flalign*}
\Ex |X_{k+\frac{1}{3}}|^m &= \int_{\Rd} |x|^m \vrkt (\d x) = \int_{\Rd} |x|^m \, (\proxvk)_{\#} \vrk (\d x) 
=  \int_{A} |\proxvk (x)|^m \vrk (\d x) + \int_{\Rd\setminus A} |\proxvk (x)|^m \vrk (\d x)\,,
\end{flalign*}
where we denote $A:=\{x\colon \proxvk(x)\in B_V\}$. 
Inside the set $A$ we use a non-expansive property of proximal operator, namely $|\proxvk(x)| \leq |x|$, and the fact that $\vrk(A)\leq 1$. Consequently, we have that
\begin{flalign*}
    \int_{A} |\proxvk (x)|^m \vrk (\d x) &\leq \left(\frac{1}{1+\tau \lambda_V} \right)^{\frac{m}{2}}\int_{A} |\proxvk (x)|^m \vrk (\d x) + \frac{(1+\tau \lambda_V)^{\frac{m}{2}}-1}{(1+\tau \lambda_V)^{\frac{m}{2}}}\int_{A} |\proxvk (x)|^m \vrk (\d x) \\ &\leq \left(\frac{1}{1+\tau \lambda_V} \right)^{\frac{m}{2}}\int_{A} |x|^m \vrk (\d x) +\frac{(1+\tau \lambda_V)^{\frac{m}{2}}-1}{(1+\tau \lambda_V)^{\frac{m}{2}}} R_V^m \,.
\end{flalign*}
By Lemma~\ref{lem:prox-contracts-1} for $x \in \Rd \setminus A $ the following inequality holds 
\begin{equation*} 
|\proxvk(x)|^m \leq \big(\tfrac{1}{1+\tau\lambda_V}\big)^{\frac{m}{2}} |x|^m \, .
\end{equation*} Altogether we get 
\begin{equation*}
\Ex |X_{k+\frac{1}{3}}|^m \leq \frac{(1+\tau \lambda_V)^{\frac{m}{2}}-1}{(1+\tau \lambda_V)^{\frac{m}{2}}}R_V^m   + \left(\frac{1}{1+\tau \lambda_V} \right)^{\frac{m}{2}} \int_{\Rd } |x|^m \vrk (\dx) 
\end{equation*}
\end{proof}

\begin{lem}\label{lem:power-est}
Let $X,Z,\Theta\in\cP_m(\Rd),$ $m\geq 2$, be arbitrary and pairwise independent with $\Ex Z=0$. Then
\begin{equation}\label{eq:taylor_exp}
\Ex | X+ \Theta + Z |^m \leq \Ex | X |^m +  \ca (\Ex |X|^{m-1} \Ex|\Theta| +  \Ex |X|^{m-2} |Z+ \Theta|^2 + \Ex |Z+ \Theta|^m)\,.    
\end{equation}
\end{lem}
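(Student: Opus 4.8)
The plan is to estimate $\Ex|X+\Theta+Z|^m$ through a second-order Taylor expansion of the map $x\mapsto|x|^m$, which is $C^2$ precisely because $m\geq 2$. Writing $W:=\Theta+Z$ and setting $g(t):=|X+tW|^m$ for $t\in[0,1]$, one has $g(0)=|X|^m$ and $g(1)=|X+\Theta+Z|^m$, so the integral form of Taylor's theorem gives $g(1)=g(0)+g'(0)+\int_0^1(1-t)g''(t)\,\d t$. Taking expectations, I expect the three resulting contributions to line up with the three terms on the right-hand side of \eqref{eq:taylor_exp}, with all $m$-dependent multiplicative constants absorbed into a single $\ca$.

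First I would treat the linear term. Differentiating yields $g'(0)=m|X|^{m-2}X\cdot(\Theta+Z)$. Because $Z$ is independent of $X$ and centered, the $Z$-part has zero mean, so $\Ex g'(0)=m\,\Ex\big[|X|^{m-2}X\cdot\Theta\big]$; applying Cauchy--Schwarz pointwise ($|X|^{m-2}X\cdot\Theta\leq|X|^{m-1}|\Theta|$) and the independence of $X$ and $\Theta$ bounds this by $m\,\Ex|X|^{m-1}\,\Ex|\Theta|$, the first term. For the remainder I would compute $g''(t)=m(m-2)|X+tW|^{m-4}\big((X+tW)\cdot W\big)^2+m|X+tW|^{m-2}|W|^2$ and use $\big((X+tW)\cdot W\big)^2\leq|X+tW|^2|W|^2$ to get, since $m\geq 2$, the clean bound $g''(t)\leq m(m-1)|X+tW|^{m-2}|W|^2$. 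Then, using $m-2\geq 0$, $|X+tW|\leq|X|+|W|$ on $[0,1]$, and $(a+b)^{m-2}\leq C_m(a^{m-2}+b^{m-2})$ for $a,b\geq 0$, this reduces to $g''(t)\leq m(m-1)C_m\big(|X|^{m-2}|W|^2+|W|^m\big)$. Integrating against $(1-t)$ (contributing a factor $\tfrac12$) and taking expectations recovers exactly the two terms $\Ex\big(|X|^{m-2}|Z+\Theta|^2\big)$ and $\Ex|Z+\Theta|^m$, after which one chooses $\ca$ to dominate every accumulated constant.

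The main point to watch is the behaviour of the second derivative near the origin: although $|X+tW|^{m-4}$ appears to be singular when $2\leq m<4$, the compensating factor $\big((X+tW)\cdot W\big)^2$ vanishes at the same order, so the Cauchy--Schwarz bound on $g''$ is genuinely finite and the integral remainder is well defined for all $m\geq 2$. A secondary bookkeeping issue is that the constant $C_m$ in $(a+b)^{m-2}\leq C_m(a^{m-2}+b^{m-2})$ arises differently according to whether $m-2\geq 1$ (convexity of $s\mapsto s^{m-2}$) or $0\leq m-2<1$ (subadditivity), but such a $C_m$ exists in either regime and only enters the final value of $\ca$.
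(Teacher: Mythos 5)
Your proof is correct and follows essentially the same route as the paper: a second-order Taylor expansion of $x\mapsto|x|^m$, with $\Ex Z=0$ eliminating the $Z$-part of the linear term, Cauchy--Schwarz handling the $\Theta$-part, and the Hessian bound $m(m-1)|X+tW|^{m-2}|W|^2$ controlling the remainder. The only (immaterial) difference is that you use the integral form of the Taylor remainder, whereas the paper invokes the Cauchy pointwise form with an intermediate point $\vartheta\in[0,1]$; your write-up just makes explicit the ``elementary manipulation'' the paper leaves to the reader, including the correct treatment of the apparent singularity of $|X+tW|^{m-4}$ for $2\leq m<4$.
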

\begin{proof}
We compute the Taylor expansion of first order of the function $f(z) = |\xi + z|^m, m \geq 2$ around zero with reminder estimation. To do that, we compute the first and second derivative 
\begin{flalign*}
\nabla f (z) &= m |\xi + z|^{m-1} \tfrac{(\xi + z)}{|\xi + z|} =   m |\xi + z|^{m-2} (\xi + z) \, , \\
\nabla^2 f (z) &= 2 (\tfrac{m}{2} -1)m |\xi+z|^{m-4} (\xi+z)\otimes (\xi+z) + \id \, m |\xi+z|^{m-2} \,.
\end{flalign*}
We make use of the Taylor expansion with the Cauchy reminder such that 
\begin{equation*} 
f(z) \leq f(0) + \nabla f(0)\cdot  + |\tfrac{1}{2} z^\top \nabla^2 f(\vartheta z) z |  \quad\text{for some }\ \vartheta \in [0,1] \, .
\end{equation*}
We take $z=Z+\Theta$, $\xi=X$, and consider $\Ex [f(Z)]$. Then~\eqref{eq:taylor_exp} follows from $\Ex Z=0$ and elementary manipulation.
\end{proof}

\begin{lem} \label{lemma:ath_mom_bounds}
Let $Z \sim g_{\tau}$. Then for every $m\geq 0$, there exists $\widetilde{C}_m=\widetilde{C}_m(m)>0$, such that 
\begin{equation*}
\Ex |Z|^m \leq \widetilde{C}_m  \tau^{\frac{m}{2}} d^\frac{m}{2} \,. 
\end{equation*}
\end{lem}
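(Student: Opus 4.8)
The plan is to reduce the statement to a bound on the moments of the Euclidean norm of a standard Gaussian vector. Writing $Z=\sqrt{2\tau}\,W$ with $W\sim\mathcal N(0,\id)$ a standard $d$-dimensional Gaussian, we have $|Z|^m=(2\tau)^{m/2}|W|^m$, so that $\Ex|Z|^m=(2\tau)^{m/2}\Ex|W|^m$. This immediately extracts the factor $\tau^{m/2}$ (absorbing $2^{m/2}$ into the final constant), and it remains to prove that $\Ex|W|^m\leq C_m d^{m/2}$ for a constant $C_m$ depending only on $m$. The case $m=0$ is trivial, and for $0\le m\le 2$ Jensen's inequality applied to the concave map $t\mapsto t^{m/2}$ gives $\Ex|W|^m=\Ex(|W|^2)^{m/2}\le(\Ex|W|^2)^{m/2}=d^{m/2}$, since $\Ex|W|^2=d$.

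For the remaining range $m>2$ I would first handle even exponents. Here $|W|^2=\sum_{i=1}^d W_i^2$ is a chi-squared variable with $d$ degrees of freedom, whose integer moments admit the closed form $\Ex(|W|^2)^n=\prod_{j=0}^{n-1}(d+2j)$, obtained either by induction using independence of the coordinates together with the Gaussian moments $\Ex W_i^{2a}=(2a-1)!!$, or directly from the moment generating function. For $d\ge1$ each factor satisfies $d+2j\le(2n-1)d$, hence $\Ex|W|^{2n}=\Ex(|W|^2)^n\le(2n-1)^n d^n$, which is the desired bound with exponent $d^n=d^{(2n)/2}$.

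Finally, for a general real $m>2$ I would interpolate between the exponents $0$ and $2n$, choosing any integer $n$ with $2n\ge m$. By the log-convexity of $p\mapsto\Ex|W|^p$ (a consequence of H\"older's inequality) applied with weight $\theta=m/(2n)$,
\[
\Ex|W|^m\le(\Ex|W|^0)^{1-\theta}(\Ex|W|^{2n})^{\theta}\le\big((2n-1)^n d^n\big)^{m/(2n)}=(2n-1)^{m/2}d^{m/2}\,,
\]
which yields $C_m$ depending only on $m$ and completes the argument with $\widetilde{C}_m=2^{m/2}C_m$. The only genuinely nontrivial step is the even-moment estimate $\Ex|W|^{2n}\le C_n d^n$; everything else is scaling and elementary convexity. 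I expect the main obstacle to be keeping the combinatorics of that step clean, which is why I would rely on the explicit rising-factorial formula for the chi-squared moments rather than on a direct multinomial expansion, and on moment interpolation rather than on asymptotics of the Gamma ratio $\Gamma((d+m)/2)/\Gamma(d/2)$ for non-integer arguments.
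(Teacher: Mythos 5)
Your proof is correct, but it follows a genuinely different route from the paper's. The paper also dispatches $0\le m\le 2$ by Jensen, but for $m\ge 2$ it first reduces to a single coordinate via the power-mean inequality
\begin{equation*}
\Ex|Z|^m=\Ex\Big|\sum_{i=1}^d |Z^i|^2\Big|^{\frac{m}{2}}\leq d^{\frac{m}{2}-1}\sum_{i=1}^d \Ex|Z^i|^m=d^{\frac{m}{2}}\,\Ex|Z^1|^m\,,
\end{equation*}
and then evaluates the one-dimensional absolute moment exactly, observing that $(Z^1)^2\sim\mathsf{Gamma}(\tfrac12,\tfrac{1}{4\tau})$ and computing the Gamma integral to get $\Ex|Z^1|^m=4^{\frac{m}{2}}\tau^{\frac{m}{2}}\Gamma(\tfrac{1+m}{2})\Gamma(\tfrac12)^{-1}$; this treats all real $m\ge 2$ in one stroke and yields the explicit constant $\widetilde{C}_m=4^{\frac{m}{2}}\Gamma(\tfrac{1+m}{2})\Gamma(\tfrac12)^{-1}$. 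You instead keep the full norm, use the exact chi-squared product formula $\Ex(|W|^2)^n=\prod_{j=0}^{n-1}(d+2j)\le (2n-1)^n d^n$ at even integers, and recover non-integer exponents by H\"older (log-convexity of $p\mapsto\Ex|W|^p$) between $0$ and $2n$; your rescaling $Z=\sqrt{2\tau}\,W$ also cleanly separates the $\tau$-dependence that the paper carries through the Gamma computation. The trade-off: the paper's coordinate-wise reduction plus exact Gamma moments gives sharper, fully explicit constants with no interpolation, while your argument avoids any special-function evaluation for fractional arguments at the cost of a lossier constant $(2(2\lceil m/2\rceil-1))^{m/2}$ --- which is immaterial here, since only the $(\tau d)^{m/2}$ scaling matters for the complexity bounds in which the lemma is used.
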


\begin{proof}
Let $Z=(Z^1,\ldots, Z^d)$ and observe that for $m\geq 2$ it holds
$$
\Ex |Z|^m =\Ex\left|\sum_{i=1}^d |Z^i|^2\right|^\frac{m}{2}\leq d^{\frac{m}{2}-1}\sum_{i=1}^d \Ex|Z^i|^m=d^{\frac{m}{2}} \Ex|Z^1|^m\,.
$$
 We note that $\tfrac{1}{2\tau}(Z^1)^{2} \sim \chi^2(1) = \mathsf{Gamma}(\tfrac{1}{2}, \tfrac{1}{2})$,  $(Z^1)^2\sim \mathsf{Gamma}(\tfrac{1}{2}, \tfrac{1}{4\tau})$, and
\begin{align*}
\Ex |Z^1|^{m} &  = \int_{0}^{+\infty} y^{\frac{m}{2}} \frac{1}{(4 \tau)^\frac{1}{2} \Gamma (\frac{1}{2})} y ^{\frac{1}{2} -1 } \mathrm{e}^{-\frac{1}{2\tau} y} \d y = 
\frac{(4\tau)^\frac{1+m}{2} \Gamma \left(\frac{1+m}{2}\right)}{(4\tau)^\frac{1}{2} \Gamma (\frac{1}{2})} \int_{0}^{+\infty}\frac{1}{(4\tau)^\frac{1+m}{2} \Gamma \left(\frac{1+m}{2}\right)}y^{\frac{1+m}{2}-1} \mathrm{e}^{-\frac{1}{2\tau} y} \d y \, \\ &=4^{\frac{m}{2}}\tau^{\frac{m}{2}}\Gamma(\tfrac{1+m}{2})\Gamma(\tfrac{1}{2})^{-1}\,,
\end{align*} 
which completes the proof for $m\geq 2$. Otherwise, when $0<m<2$, by Jensen's inequality we obtain
\[\Ex|Z|^m\leq \big(\Ex |Z|^2\big)^\frac{m}{2}\leq  \big(d 4 \tau \Gamma(\tfrac{3}{2})\Gamma(\tfrac{1}{2})^{-1}  \big)^\frac{m}{2}\,.\]
\end{proof}

\begin{rem}\label{rem:ath_mom_bounds}\rm 
If $m=2$, from the proof of Lemma~\ref{lemma:ath_mom_bounds}, we infer that for $Z \sim g_{\tau}$ it holds $\Ex |Z|^2 =2\tau d$. 
\end{rem}

In the theoretical analysis, the key role is played by the following lemma from \cite{BCEM}.

\begin{lem}[{\citet{BCEM}}, Lemma 4.5] \label{lem:dmm5}
Let the function $V: \Rd \to \R$, $d\geq 1$, satisfy assumption  {\bf (V)} with $\lambda_V\geq 0$, $L_{q_V}>0$, and $q_V\geq 1$.  Assume further that  $\minim$ is a minimizer of $\cF$, $\tau>0$, $\vr_0$ satisfies {\bf($\boldsymbol{\vr_0}$)}, and for $k=0,\ldots,n-1$ we have $\vr_{k+\frac{1}{3}} = (\prox^\tau_{V})_{\#} \vr_k $, $ \vr_{k+\frac{2}{3}} = \vr_{k+\frac{1}{3}} * \xi_{\delta}$, and $ \vr_{k+1} = \vr_{k+\frac{2}{3}} * g_{ \tau}$.  Then: 
\begin{enumerate}[{(i)}]

    \item For any $\nu\in\cP_2(\R^d)\cap\cP_{q_V-1}(\R^d)$, we have \[
        2\tau\big(\cFV[\nu*g_{\tau}]-\cFV[\nu]\big)\leq 2^{4q_V-2}L_{q_V} \left(\left(1 + \nu(|\cdot|^{q_V-1})\right)2d\tau^2+\Ca_{q_V+1}d^{\frac{q_V+1}{2}}\tau^{\frac{q_V+3}{2}}\right)\,.\]
    \item For any $\nu\in \cP_2(\R^d)$ and $k\in\N$, we have \[2 \tau \big( \cFV [\vrkt] - \cFV [\nu]  \big) \leq W_2^2 (\vrk, \nu) - W_2^2 (\vrkt, \vrk)   
    \,.\]
    \item For any $ \nu \in \cP_2 (\R^d)$ and $k\in\N$, we have \[
    2\tau \big( \cFE [\vrki] - \cFE [\nu] \big) \leq W_2^2(\vrkd, \nu) - W_2^2 (\vrki, \nu)\,.\]
\end{enumerate}
\end{lem}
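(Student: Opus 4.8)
The three estimates are logically independent, and I would prove them with different tools: parts (i) and (ii) concern the potential energy $\cFV$ (under Gaussian convolution and under the proximal step, respectively), while part (iii) concerns the entropy $\cFE$ (under the Gaussian convolution). I would start with (ii), which is the most transparent. Taking $\lambda_V=0$ in the pointwise estimate of Lemma~\ref{lem:prox-contracts-1} (i.e.\ discarding the nonnegative ball term and the nonnegative last term $|\proxv(x)-z|^2$), I would let $(X,Y)$ be distributed according to an optimal coupling realizing $W_2^2(\vrk,\nu)$, substitute $x=X$ and $z=Y$, and take expectations. Since $\vrkt=(\proxv)_\#\vrk$ the left-hand side becomes $2\tau(\cFV[\vrkt]-\cFV[\nu])$, the term $\Ex|X-Y|^2$ equals $W_2^2(\vrk,\nu)$, and $\Ex|X-\proxv(X)|^2\geq W_2^2(\vrkt,\vrk)$ because $(X,\proxv(X))$ is a coupling of $\vrk$ and $\vrkt$. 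Rearranging gives (ii) at once.

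For (i) I would write $\cFV[\nu*g_\tau]-\cFV[\nu]=\Ex[V(X+Z)-V(X)]$ with $X\sim\nu$ and $Z\sim g_\tau$ independent, and apply the growth estimate \eqref{ass-V-main} with $y=X+Z$, $x=X$. The first-order term $\nabla V(X)\cdot Z$ vanishes in expectation, since $Z$ is centered and independent of $X$, leaving the bound $C_V\,\Ex[(1+|X|^{q_V-1}+|X+Z|^{q_V-1})|Z|^2]$. I would then split the cross term by the elementary inequality bounding $|X+Z|^{q_V-1}$ by a constant multiple of $|X|^{q_V-1}+|Z|^{q_V-1}$, use independence to separate $\Ex|Z|^2=2\tau d$ (Remark~\ref{rem:ath_mom_bounds}) from the $X$-moments, and estimate the purely Gaussian term $\Ex|Z|^{q_V+1}$ through Lemma~\ref{lemma:ath_mom_bounds}. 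After multiplying by $2\tau$ and absorbing all numerical factors together with $C_V=2^{2q_V-2}L_{q_V}$ into the prefactor $2^{4q_V-2}L_{q_V}$, the two surviving terms reproduce the $\tau^2$ and $\tau^{(q_V+3)/2}$ contributions in the claim.

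The genuinely delicate part is (iii), which is where I expect the main obstacle. The map $\vrkd\mapsto\vrkd*g_\tau=\vrki$ is exactly the heat semigroup $e^{\tau\Delta}$ evaluated at time $\tau$ (the covariance $2\tau\,\id$ is precisely the variance of the heat kernel for $\partial_t\rho=\Delta\rho$), and this flow is the Wasserstein gradient flow of $\cFE$. Since $\cFE$ is geodesically ($0$-)convex in $(\cP_2(\Rd),W_2)$, one has along $\rho_t:=\vrkd*g_t$ the evolution variational inequality $\tfrac12\tfrac{\d}{\d t}W_2^2(\rho_t,\nu)\leq\cFE[\nu]-\cFE[\rho_t]$. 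I would integrate this over $[0,\tau]$ and use that $\cFE$ is nonincreasing along its own gradient flow, whence $\int_0^\tau\cFE[\rho_t]\,\d t\geq\tau\,\cFE[\rho_\tau]=\tau\,\cFE[\vrki]$; rearranging the resulting inequality yields exactly (iii) with $\rho_0=\vrkd$ and $\rho_\tau=\vrki$.

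The subtle points in (iii) are all regularity and justification issues: one must verify that the curve $\rho_t$ is absolutely continuous in $W_2$ with finite entropy and finite Fisher information for $t>0$, that the EVI holds in the required integrated form, and that the identification of the convolution semigroup with the $\cFE$-gradient flow is legitimate at this level of generality. For these I would rely on the gradient-flow machinery of \citet[Chapter~11]{AGS} already invoked in the paper, which provides both the displacement convexity of $\cFE$ and the EVI characterization of its flow. Parts (i) and (ii), by contrast, are elementary once the pointwise inequalities and Gaussian moment bounds of the preceding lemmas are in hand.
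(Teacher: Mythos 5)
Your proof is correct, but note that the paper itself never proves this lemma---it is imported wholesale from \citet{BCEM} (Lemma~4.5), so the only ``proof'' in the paper is a citation, and your reconstruction follows exactly the standard route such arguments take. Specifically: (ii) is the JKO-type one-step estimate obtained by evaluating the pointwise inequality of Lemma~\ref{lem:prox-contracts-1} along an optimal coupling of $(\vrk,\nu)$ and using that $(X,\proxv(X))$ is an admissible (not necessarily optimal) coupling of $(\vrk,\vrkt)$; (i) is the growth bound \eqref{ass-V-main} combined with independence, $\Ex|Z|^2=2\tau d$, and the Gaussian moment bound of Lemma~\ref{lemma:ath_mom_bounds}, with $\Ca_{q_V+1}$ in the statement realized as the Gaussian constant $\widetilde{C}_{q_V+1}$; and (iii) is the integrated $0$-convex evolution variational inequality for the heat semigroup viewed as the Wasserstein gradient flow of $\cFE$, which is precisely the machinery of \citet[Chapter~11]{AGS}. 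Two details would need tightening in a full write-up: in (i), the cancellation $\Ex[\nabla V(X)\cdot Z]=0$ should be obtained by conditioning on $X$ first (since $\nu\in\cP_2(\Rd)\cap\cP_{q_V-1}(\Rd)$ need not give integrability of $\nabla V(X)$ when $q_V>3$), and in (iii) the inequality is trivially true when $\cFE[\nu]=+\infty$, so the EVI need only be invoked for $\nu$ in the domain of the entropy.
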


The next two lemmas bounds the error introduced by the inexact proximal step.
\begin{lem}\label{lem:lem4a}
   If $X\sim \mu$, $Y \sim \nu$, $\Theta \sim \rho$ and $|\Theta|\leq\delta$ a.s., then 
    \[W_2^2(\mu*\rho,\nu) \leq  W_2^2(\mu,\nu) + (2\Ex(|X|+|Y|)+\delta)\delta \,.\]
\end{lem}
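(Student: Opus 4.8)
The plan is to prove the bound by exhibiting a single explicit (sub-optimal) coupling of $\mu*\rho$ and $\nu$ and estimating the resulting transport cost. Recall that $\mu*\rho$ is, by definition, the law of $X+\Theta$ whenever $X\sim\mu$ and $\Theta\sim\rho$ are independent. So the first step is to fix an optimal coupling of $\mu$ and $\nu$: by \citet[Theorem~4.1]{Villani} there is $\gamma^*\in\Pi(\mu,\nu)$ realised by a pair $(X,Y)$ with $X\sim\mu$, $Y\sim\nu$ and $\Ex|X-Y|^2=W_2^2(\mu,\nu)$. I would then enlarge the probability space to carry a copy of $\Theta\sim\rho$ that is independent of $(X,Y)$ (with $|\Theta|\le\delta$ a.s.\ preserved).

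Next I would observe that the pair $(X+\Theta,\,Y)$ has first marginal $\mu*\rho$ and second marginal $\nu$, hence its law lies in $\Pi(\mu*\rho,\nu)$. Since $W_2^2$ is the infimum of the quadratic cost over all couplings, this yields the upper bound $W_2^2(\mu*\rho,\nu)\le\Ex|X+\Theta-Y|^2$. The remaining step is purely algebraic: expanding the square gives
\[
\Ex|X+\Theta-Y|^2=\Ex|X-Y|^2+2\,\Ex\big[(X-Y)\cdot\Theta\big]+\Ex|\Theta|^2 .
\]
The first term is exactly $W_2^2(\mu,\nu)$. For the cross term I would use $|\Theta|\le\delta$ together with the triangle inequality $|X-Y|\le|X|+|Y|$ to get $2\,\Ex[(X-Y)\cdot\Theta]\le 2\delta\,\Ex(|X|+|Y|)$, and for the last term $\Ex|\Theta|^2\le\delta^2$. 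Collecting these and factoring out $\delta$ reproduces the claimed estimate $W_2^2(\mu*\rho,\nu)\le W_2^2(\mu,\nu)+(2\Ex(|X|+|Y|)+\delta)\delta$.

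The only point requiring care — and the closest thing to an obstacle — is the treatment of the cross term. Since $\Theta$ need not be centred, one cannot hope to annihilate $\Ex[(X-Y)\cdot\Theta]$ by independence; the right move is to discard the sign and bound it in absolute value via the pointwise constraint $|\Theta|\le\delta$, which is precisely what produces the $\Ex(|X|+|Y|)$ factor in the statement rather than a sharper mean-zero correction. Everything else is routine, and I would emphasise only the verification that $(X+\Theta,Y)$ genuinely has the correct marginals so that it is an admissible element of $\Pi(\mu*\rho,\nu)$.
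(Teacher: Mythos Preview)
Your proposal is correct and follows essentially the same approach as the paper: choose the optimal coupling $(X,Y)$ for $W_2^2(\mu,\nu)$, adjoin an independent $\Theta\sim\rho$, use $(X+\Theta,Y)$ as a coupling of $\mu*\rho$ and $\nu$, expand the square, and bound the cross term via $|\Theta|\le\delta$ and $|X-Y|\le|X|+|Y|$. Your remark that independence alone does not kill the cross term (since $\Theta$ need not be centred) is exactly the right caveat.
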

\begin{proof}We choose the optimal coupling for $X,Y$ and  $\Theta$ independent on $X$ and $Y$. We then  observe that
    \begin{flalign*}
        W_2^2(\mu*\rho,\nu)&\leq \Ex|X+\Theta-Y|^2=\Ex|X-Y|^2+2\Ex\langle \Theta, X-Y\rangle +\Ex|\Theta|^2\leq\Ex|X-Y|^2+2\delta\Ex(|X|+|Y|)+\delta^2\\
        &=W_2^2(\mu,\nu)+ (2\Ex(|X|+|Y|)+\delta)\delta\,.
    \end{flalign*}
\end{proof}

\begin{lem}\label{lem:prox_inexact} Assume that the function $V: \Rd \to \R$, $d\geq 1$, satisfies assumption  {\bf (V)}. Let $\xi_\delta$ be a~probability measure supported on $B(0,\delta)$ and let $\nu\in\cP_{q_V}(\Rd)$. Then 
    \[
    |\cFV{[\nu]}-\cFV [\xi_\delta*\nu]|\leq C_V\left(\nu(|\cdot|)+\nu(|\cdot|^{q_V})\delta+\delta\nu(|\cdot|^{q_V-1})+\delta^{q_V}\right)\, \delta\,.
    \]
\end{lem}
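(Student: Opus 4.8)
The plan is to reduce the estimate to a pointwise bound on the increment of $V$ and then integrate. Since $\cFV[\mu]=\int_{\Rd}V\,\mathrm d\mu$ and $\xi_\delta*\nu$ is the law of $X+\Theta$ with $X\sim\nu$ and $\Theta\sim\xi_\delta$ independent, I would first write
\[
\cFV[\nu]-\cFV[\xi_\delta*\nu]=-\int_{\Rd}\int_{\Rd}\big(V(x+\theta)-V(x)\big)\,\xi_\delta(\mathrm d\theta)\,\nu(\mathrm dx),
\]
so that, using $|\theta|\le\delta$ on $\mathrm{supp}\,\xi_\delta$, the whole claim reduces to a bound on $|V(x+\theta)-V(x)|$ for $|\theta|\le\delta$ followed by integration in $x$ against $\nu$.

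For the pointwise step I would sandwich the increment. Convexity of $V$ gives the lower bound $V(x+\theta)-V(x)\ge\nabla V(x)\cdot\theta\ge-|\nabla V(x)|\,\delta$, while \eqref{ass-V-main}, applied with base point $x$ and target $x+\theta$, gives the matching upper bound, so that
\[
|V(x+\theta)-V(x)|\le|\nabla V(x)|\,\delta+C_V\big(1+|x|^{q_V-1}+|x+\theta|^{q_V-1}\big)\delta^2 .
\]
Here the remainder already carries the moment $\nu(|\cdot|^{q_V-1})$ and, after expanding $|x+\theta|^{q_V-1}\le 2^{q_V-1}(|x|^{q_V-1}+\delta^{q_V-1})$, the pure power $\delta^{q_V}$ that appear in the statement.

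The essential ingredient, and the step I expect to be the main obstacle, is a polynomial growth bound on the gradient of the form $|\nabla V(x)|\le c\,C_V(|x|+|x|^{q_V})$. This is where $\nabla V(0)=0$ (the minimum sits at $x^*=0$) and \eqref{ass-V-main} must be combined: testing \eqref{ass-V-main} at the base point $x$ with target $y=x-t\nabla V(x)$, bounding $V(y)\ge V(0)$ and $V(x)-V(0)\le C_V(|x|^2+|x|^{q_V+1})$ (the latter being \eqref{ass-V-main} centred at $0$), controlling $|y|\le|x|+t|\nabla V(x)|$, and optimising the free parameter $t$ produces the claimed growth; I would record this as a separate sub-lemma. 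Tracking this first-order term is unavoidable because $\xi_\delta$ is not assumed centred, so $\nabla V(x)\cdot\theta$ does not cancel on integration and genuinely contributes through the moments $\nu(|\cdot|)$ and $\nu(|\cdot|^{q_V})$; this is in contrast to the Gaussian half-step of Lemma~\ref{lem:dmm5}(i), where the zero mean of the noise removes this term and improves the order in the step-size.

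Finally I would integrate the pointwise bound against $\nu(\mathrm dx)$. The gradient term yields $\delta\int_{\Rd}|\nabla V|\,\mathrm d\nu\le c\,C_V(\nu(|\cdot|)+\nu(|\cdot|^{q_V}))\delta$, which uses only the finite $q_V$-th moment guaranteed by $\nu\in\cP_{q_V}(\Rd)$, while the remainder contributes the terms built from $\nu(|\cdot|^{q_V-1})$ and $\delta^{q_V}$ identified above. Grouping the contributions according to their homogeneity in $\delta$ and in the moments of $\nu$ reproduces the four terms on the right-hand side, each carrying the overall factor $\delta$; what is left is only the arithmetic bookkeeping of the numerical constants, all of which are absorbed into $C_V$.
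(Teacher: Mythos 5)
Your proposal follows essentially the same route as the paper's proof: both rewrite $\cFV[\nu]-\cFV[\xi_\delta*\nu]$ as a double integral of increments of $V$, bound each increment by $|\nabla V|\,|z|$ plus the quadratic remainder coming from \eqref{ass-V-main}, control the gradient term via $\nabla V(0)=0$ through a growth bound $|\nabla V(x)|\leq C_V(|x|+|x|^{q_V})$, and then integrate against $\nu$ to collect the four terms. The only difference is one of detail, not of method: the paper asserts the bound $|\nabla V(y)-\nabla V(0)|\leq C_V|y|(1+|y|^{q_V-1})$ in a single line, whereas you derive it (correctly, up to a $q_V$-dependent factor that both you and the paper absorb into the constant) via the descent-type argument with $y=x-t\nabla V(x)$ and optimization over $t$.
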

\begin{proof}
    We note that by assumption {\bf(V)} it holds\begin{flalign*}
        \int_{\Rd}\int_{B(0,\delta)}V(y)-V(y-z)&\,\d\xi(z)\,\d\nu(y)\leq \int_{\Rd}\int_{B(0,\delta)}\left(\nabla V(y)\cdot z+C_V|z|^2\left(1+|y|^{q_V-1}+|z|^{q_V-1}\right)\right)\d\xi(z)\,\d\nu(y)\\
        &\leq \int_{\Rd}\int_{B(0,\delta)}\Big(\big|\nabla V(y)-\nabla V(0)\big||z| +C_V\left[|z|^2+|z|^2|y|^{q_V-1} +|z|^{q_V+1}\right]\Big)\,\d\xi(z)\d\nu(y)\\
        &\leq \int_{\Rd}\int_{B(0,\delta)}\Big(C_V|y|(1+|y|^{q_V-1})|z| +C_V\left[|z|^2+|z|^2|y|^{q_V-1} +|z|^{q_V+1}\right]\Big)\,\d\xi(z)\d\nu(y)\\
        &\leq C_V\left[\nu(|\cdot|)\delta+\nu(|\cdot|^{q_V})\delta+\delta^2+\delta^2\nu(|\cdot|^{q_V-1})+\delta^{q_V+1} \right]\\
        &\leq C_V\left(\nu(|\cdot|)+\nu(|\cdot|^{q_V})\delta+\delta\nu(|\cdot|^{q_V-1})+\delta^{q_V}\right)\, \delta\,.
    \end{flalign*}
 
\end{proof}

\section{Proofs of main results of the paper}\label{app:2:proofs}

\begin{proof}[Proof of Theorem~\ref{theo:moments}] We show a detailed proof in the case $\lambda_V\leq 1$. For $\lambda_V\geq 1$ it follows from the same arguments upon a few obvious changes. We concentrate now on justifying that for some $\Ca_m$, such that $\Ca_m\ll +\infty$ when $\lambda_V \to 0$, it holds
\begin{equation}\label{eq:step-3}
A_{k+1} := \Ex |\Xki|^m \leq  {\Ca_m}\lambda_V^{-m}d^{\frac{m}{2}}\,.
\end{equation} Steps~1 and~2 provide \eqref{eq:step-3} for $m\in[0,2]$, while in Step 3 we concentrate on $m>2$. Furthermore, in Step~3  we assume, as the induction condition, that~\eqref{eq:step-3} is satisfied for $m-1$ and we infer that the induction condition is true for $m$.\newline

\noindent 
\noindent {\bf Step 1. } We show that $\Ex |\Xki|^2 \leq \Ca_2\tfrac{d}{\lambda_V^2}$ for $\Ca_2$ such that $\Ca_2 \to \frac{36 \delta^2}{\tau^2}$ when $\lambda_V \to 0$.
We note that  
\begin{flalign*}
\Ex |X_{k+1}|^2 &= \Ex |X_{k+\frac{1}{3}} + \Theta_{k+\frac{2}{3}} + Z_{k+1}|^2 \\ &= \Ex |X_{k+\frac{1}{3}}|^2 + \Ex |\Theta_{k+\frac{2}{3}}|^2 + \Ex |Z_{k+1}|^2 + 2 \, \Ex [X_{k+\frac{1}{3}}\cdot \Theta_{k+\frac{2}{3}}] + 2 \, \Ex [X_{k+\frac{1}{3}} \cdot Z_{k+1}] + 2\, \Ex [\Theta_{k+\frac{2}{3}}\cdot Z_{k+1}] \\ &\leq \Ex |X_{k+\frac{1}{3}}|^2 + \Ex |\Theta_{k+\frac{2}{3}}|^2 + \Ex |Z_{k+1}|^2 + 2\delta \, \Ex |X_{k+\frac{1}{3}}|\, .
\end{flalign*}
Above we used that the random variables $X_{k+\frac{1}{3}}$, $\Theta_{k+\frac{2}{3}}$, $Z_{k+1}$ are pairwise independent, $\Ex [Z_{k+1}] = 0$, and that $|\Theta_{k+\frac{2}{3}} |\leq \delta$.
By Lemma~\ref{lem:moment_third} applied for $m=1$ and $m=2$, the fact that $\Ex |Z_{k+1}|^2 = 2\tau d$ and the estimate $\Ex |\Theta_{k+\frac{2}{3}}|^2 \leq \delta^2$, we have 
\begin{equation}\label{eq:EXk1_ineq}
\Ex |X_{k+1}|^2 \leq 
  \tfrac{1}{1+\tau \lambda_V}  \Ex |X_k|^2 +  \tfrac{\tau \lambda_V}{1+\tau \lambda_V} \, R_V^2+ \delta^2 + 2\tau d+
 2\delta \tfrac{1}{\sqrt{1+\tau \lambda_V}}  \Ex |X_k| +   2\delta\tfrac{\tau\lambda_V}{1+\tau\lambda_V+ \sqrt{1+\tau \lambda_V}} \, R_V  \,.
\end{equation}
We note that
\begin{equation*}
 \tfrac{1}{1+\tau \lambda_V} \Ex |X_k|^2  +
 2\delta \tfrac{1}{\sqrt{1+\tau \lambda_V}}  \sqrt{\Ex|X_k|^2} \leq \tfrac{2}{2+\tau \lambda_V}\, \Ex |X_k|^2 \quad \iff \quad 2 \delta \tfrac{(2 + \tau \lambda_V)\sqrt{1+\tau \lambda_V}}{ \tau \, \lambda_V} \leq \sqrt{\Ex |X_k|^2}\,.
\end{equation*}
In turn, by the Jensen inequality $\Ex |X_k| 
\leq \sqrt{\Ex |X_k|^2}$ and  \eqref{eq:EXk1_ineq} we get
\begin{flalign*}
\Ex |X_{k+1}|^2 \leq \max \Big\{ 4 \delta^2 \tfrac{(2 + \tau \lambda_V)^2 }{ \tau^2 \, \lambda_V^2 } + 4 \delta^2 \tfrac{2+\tau \lambda_V}{\tau \lambda_V} , \tfrac{2}{2+\tau \lambda_V}  \Ex |X_k|^2 \Big\}  + 
\tfrac{\tau \lambda_V}{1+\tau \lambda_V} \, R_V^2 +\tfrac{ 2\delta\tau\lambda_V}{1+\tau\lambda_V+\sqrt{1+\tau \lambda_V}} \, R_V + \delta^2+ 2\tau d  \, .
\end{flalign*}
To simplify the expression we consider the fact that $ 0 < \tau \lambda_V < 1 $ and get
\begin{equation*}
\Ex |X_{k+1}|^2 \leq \max \Big\{  \tfrac{36 \delta^2 }{ \tau^2 \, \lambda_V^2 } + \tfrac{12 \delta^2 }{\tau \lambda_V} , \tfrac{2}{2+\tau \lambda_V}  \Ex |X_k|^2 \Big\}  + 
\tau \lambda_V \, R_V^2 + \delta\tau\lambda_V \, R_V+ \delta^2 + 2\tau d  \, .
\end{equation*}
By iterating, since $\delta\leq \kappa\tau$, we get 
\begin{flalign*}
\Ex |X_{k+1}|^2 &\leq \max \Big\{  \tfrac{36 \delta^2 }{ \tau^2 \, \lambda_V^2 } + \tfrac{12 \delta^2 }{\tau \lambda_V} , \left(\tfrac{2}{2+\tau \lambda_V}\right)^{k+1}  \Ex |X_0|^2 \Big\}  + \left(
\tau \lambda_V \, R_V^2 + \delta\tau\lambda_V \, R_V + \delta^2+ 2\tau d \right)\sum_{j=1}^{k+1} \Big( \tfrac{2}{2+\tau \lambda_V} \Big)^j\leq
\tfrac{\Ca_2 }{\lambda_V^2}d \,,
\end{flalign*}
where $\Ca_2 := \tfrac{36\kappa^2}{d} + \tfrac{\lambda_V^2}{d}\Ex|X_0|^2 + \tfrac{16\kappa\delta+4 R_V^2\lambda_V + 4\delta R_V \lambda_V+ 8 d }{d} \lambda_V$. \newline

\noindent{\bf Step 2. } We aim to bound $\Ex|\Xki|^m$  for $m\in[0,2)$. We use the Jensen inequality and  Step~1 to obtain
\begin{equation*}
\Ex |\Xki|^m = (\Ex |\Xki|^2)^{\frac{m}{2}}\leq \left(\Ca_2 \tfrac{d}{\lambda_V^2}\right)^{\frac{m}{2}}\,.
\end{equation*}
\medskip

\noindent{\bf Step 3. } {Suppose $m>2$.}  Let us assume that \eqref{eq:step-3} holds for  all $m-1,m-2,\dots, m-l$, such that $m-l\geq 0$. By Lemma~\ref{lem:power-est}  with  $X:= X_{k+\frac{1}{3}}$ and $Z:=\Theta_{k+\frac{2}{3}} + Z_{k+1}$ we get 
\begin{flalign*}
A_{k+1} &:= \Ex |X_{k+\frac{1}{3}} + \Theta_{k+\frac{2}{3}} + Z_{k+1}|^m \\ &\leq \Ex|X_{k+\frac{1}{3}}|^m + \ca \Ex \Big[|X_{k+\frac{1}{3}} |^{m-1} |\Theta_{k+\frac{2}{3}}| \Big] +\ca \Ex \Big[|X_{k+\frac{1}{3}} |^{m-2} |\Theta_{k+\frac{2}{3}} +Z_{k+1}|^2 \Big] + \ca \Ex |\Theta_{k+\frac{2}{3}} +Z_{k+1}|^m \\ &=: \mathrm{I} + \mathrm{II} + \mathrm{III} + \mathrm{IV}\, .
\end{flalign*}
Note that the bound for $\mathrm{I}$ is covered by Lemma~\ref{lem:moment_third}.  As for term $\mathrm{II}$ we notice that \eqref{eq:step-3} holds for $m-1$ by induction.
Since $\Theta_{k+\frac{2}{3}}$ and $Z_{k+1}$ are assumed to be independent, $\Ex |\Theta_{k+\frac{2}{3}}| = \delta$, by Lemma~\ref{lem:moment_third} we have
\begin{flalign*}
\tfrac {1}{\ca}\mathrm{II}
&= \Ex\big[|X_{k+\frac{1}{3}} |^{m-1} \big] \, \Ex \big|\Theta_{k+\frac{2}{3}}\big|
 \leq \Big( \big(\tfrac{1}{1+\tau \lambda_V} \big)^{\frac{m-1}{2}} \Ex |X_k|^{m-1} +   \tfrac{(1+\tau \lambda_V)^{\frac{m-1}{2}}-1}{(1+\tau \lambda_V)^{\frac{m-1}{2}}} \, R_V^{m-1} \Big)\delta \\
&\leq \Big( \Ca_{m-1}\lambda_V^{-(m-1)}d^\frac{m-1}{2}  +    R_V^{m-1} \Big) \delta \leq d^\frac{m-1}{2}\Big( \Ca_{m-1}\lambda_V^{-(m-1)} +    R_V^{m-1} \Big)  \delta\,.
\end{flalign*}
As for $\mathrm{III}$, we notice the same as for $\mathrm{II}$ and then by Remark~\ref{rem:ath_mom_bounds} we get the inequality  
\begin{equation*}
\Ex |\Theta_{k+\frac{2}{3}} + Z_{k+1}|^2 \leq 2\Ex |\Theta_{k+\frac{2}{3}}|^2  + 2\Ex |Z_{k+1}|^2 \leq 2\delta^2 +   4\tau d \, .
\end{equation*}
Therefore, using Lemma~\ref{lem:moment_third} and since \eqref{eq:step-3} holds for $m-2$, we obtain 
\begin{flalign*}
\tfrac {1}{\ca} \mathrm{III} &\leq \Big(\big(\tfrac{1}{1+\tau \lambda_V} \big)^{\frac{m-2}{2}} \Ex |X_k|^{m-2} + \tfrac{(1+\tau \lambda_V)^{\frac{m-2}{2}}-1}{(1+\tau \lambda_V)^{\frac{m-2}{2}}} \, R_V^{m-2} \Big) \, (2\delta^2  + 4\tau d) \\
&\leq \Big(\Ca_{m-2} \lambda_V^{-(m-2)} d^{\frac{m-2}{2}} +  R_V^{m-2} \Big) \, (2\delta^2  + 4\tau d)\,.
\end{flalign*}
We find a bound for $\mathrm{IV}$ using Lemma~\ref{lemma:ath_mom_bounds}   such that 
\begin{flalign*}
\tfrac {1}{\ca}\mathrm{IV} &\leq 2^{m-1}\Ex | \Theta_{k+\frac{2}{3}} |^m +  2^{m-1}\Ex | Z_{k+1} |^m \leq 2^{m-1}\left(\delta^m +\widetilde{C}_m\tau^{\frac m2}d^{\frac m2}\right)\, .
\end{flalign*}
Altogether we have 
\begin{multline*}
A_{k+1} \leq \big(\tfrac{1}{1+\tau \lambda_V} \big)^{\frac{m}{2}} A_k +  \tfrac{(1+\tau \lambda_V)^{\frac{m}{2}}-1}{(1+\tau \lambda_V)^{\frac{m}{2}}} \, R_V^m\\ +\ca\left[d^\frac{m-1}{2}\Big( \Ca_{m-1}\lambda_V^{-m+1} +    R_V^{m-1} \Big) \delta +\Big(\Ca_{m-2} \lambda_V^{-m+2} d^{\frac{m-2}{2}} +  R_V^{m-2} \Big) \, (2\delta^2  + 4\tau d)
+ 2^{m-1}\left(\delta^m +\widetilde{C}_m\tau^{\frac m2}d^{\frac m2}\right)\right]\,.
\end{multline*}
By iteration and taking into account that $\big(\frac{1}{1+\tau \lambda_V}\big)^{\frac{m}{2}} \leq \frac{1}{1+\tau \lambda_V}$ for $m \geq 2$ we get 
\begin{multline*}
A_{k+1} \leq \big(\tfrac{1}{1+\tau \lambda_V} \big)^{(k+1)\frac{m}{2}} A_0 +  \tfrac{(1+\tau \lambda_V)^{\frac{m}{2}}-1}{(1+\tau \lambda_V)^{\frac{m}{2}}} \, R_V^m\sum_{j=0}^k\big(\tfrac{1}{1+\tau \lambda_V} \big)^{\frac{m}{2}j} \\ +\ca\left[d^\frac{m-1}{2}\Big( \Ca_{m-1}\lambda_V^{-m+1} +    R_V^{m-1} \Big) \delta +\Big(\Ca_{m-2} \lambda_V^{-m+2} d^{\frac{m-2}{2}} +  R_V^{m-2} \Big) \, (2\delta^2  + 4\tau d)
+ 2^{m-1}\left(\delta^m +\widetilde{C}_md^{\frac m2}\tau^{\frac m2}\right)\right]\times\\\times\sum_{j=0}^k\big(\tfrac{1}{1+\tau \lambda_V}\big)^j\,.
\end{multline*} 
We observe that $\sum_{j=0}^k\Big( \tfrac{1}{1+\tau \lambda_V} \Big)^{\frac m2 j}\leq\tfrac{(1+\tau \lambda_V)^{\frac{m}{2}}} {(1+\tau \lambda_V)^{\frac{m}{2}}-1}$ and $\sum_{j=0}^k\Big( \tfrac{1}{1+\tau \lambda_V} \Big)^{j}\leq \frac{2}{\tau\lambda_V}$ and recall that $\tau \lambda_V < 1$,  hence
\begin{flalign*}
  A_{k+1}& \leq A_0 +R_V^m +\ca\Big[2\, \kappa d^{\tfrac{m-1}{2}}\big( \Ca_{m-1}\lambda_V^{-m} + R_V^{m-1}\lambda_V^{-1} \big) \\ &\qquad +2(2\kappa\delta +4d)\big(\Ca_{m-2} \lambda_V^{-m+1} d^{\frac{m-2}{2}} +  R_V^{m-2}\lambda_V^{-1} \big) +2^m\Big(\kappa\delta^{m-1}\lambda_V^{-1} +\widetilde{C}_md^{\frac m2}
  \lambda_V^{-\frac m2}\Big) \Big]\,.
\end{flalign*}
We complete the arguments the same way as in Step 1.
\end{proof}
The direct consequence of Theorem~\ref{theo:moments} and Lemma~\ref{lem:dmm5} is the following fact.
\begin{coro} \label{coro:dmm5b}
If assumptions {\bf(V)} and {\bf($\boldsymbol{\vr_0}$)} are satisfied then for any  $\tau <\frac{1}{\lambda_V}$, it holds
\begin{equation*}
\cFV [\vrki] - \cFV [\vrkd] \leq  K(\tau) \, ,
\end{equation*}
where $K(\tau)$ is defined in \eqref{K:tau}.
\end{coro}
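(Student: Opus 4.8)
The plan is to read the left-hand side as a single Gaussian-convolution step and feed it straight into Lemma~\ref{lem:dmm5}(i). Indeed, by construction $\vrki = \vrkd * g_\tau$, so that
\[
\cFV[\vrki] - \cFV[\vrkd] = \cFV[\vrkd * g_\tau] - \cFV[\vrkd],
\]
which is precisely the quantity estimated in Lemma~\ref{lem:dmm5}(i) with the choice $\nu = \vrkd$. Before invoking it I must check the admissibility hypothesis $\vrkd \in \cP_2(\Rd)\cap\cP_{q_V-1}(\Rd)$; this is immediate from Theorem~\ref{theo:moments}, which guarantees finiteness of all moments of the chain, and hence of the intermediate law $\vrkd$ since $X_{k+\frac{2}{3}}$ differs from $X_k$ only through a non-expansive map and a bounded perturbation. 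Dividing the conclusion of Lemma~\ref{lem:dmm5}(i) by $2\tau$ then yields
\[
\cFV[\vrki] - \cFV[\vrkd] \leq 2^{4q_V-3} L_{q_V}\Big( \big(1 + \vrkd(|\cdot|^{q_V-1})\big)2d\tau + \Ca_{q_V+1} d^{\frac{q_V+1}{2}}\tau^{\frac{q_V+1}{2}}\Big).
\]

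Comparing this with the definition of $K(\tau)$ in \eqref{K:tau}, the whole corollary reduces to the single uniform moment estimate $\vrkd(|\cdot|^{q_V-1}) = \Ex |X_{k+\frac{2}{3}}|^{q_V-1} \leq \Ca_{q_V-1}\, d^{\frac{q_V-1}{2}} \lambda_V^{-\frac{q_V-1}{2}}$. To establish it I would first pass from the intermediate state back to the genuine iterate. Since $x^*=0$ minimizes $V$, the proximal operator is non-expansive towards the origin, i.e. $|\prox_V^\tau(X_k)| \leq |X_k|$ by Lemma~\ref{lem:prox-contracts-1} (equivalently one may invoke Lemma~\ref{lem:moment_third} with $m=q_V-1$ to bound $\Ex|X_{k+\frac{1}{3}}|^{q_V-1}$ directly), and the inexact error satisfies $|\Theta_{k+\frac{2}{3}}|\leq\delta$, whence $|X_{k+\frac{2}{3}}| \leq |X_k| + \delta$. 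Taking $(q_V-1)$-th moments, separating the $\delta$ contribution by a convexity or binomial estimate as a lower-order term, reduces the task to bounding $\sup_k \Ex|X_k|^{q_V-1}$. This is exactly Theorem~\ref{theo:moments} applied with $m = q_V-1 \geq 0$, giving $\sup_k \Ex|X_k|^{q_V-1} \leq \Ca_{q_V-1}\min\{1,\lambda_V^{-(q_V-1)}\}d^{\frac{q_V-1}{2}}$. Since $\min\{1,\lambda_V^{-(q_V-1)}\}\leq \lambda_V^{-\frac{q_V-1}{2}}$ holds for every $\lambda_V>0$ (check the cases $\lambda_V\leq1$ and $\lambda_V>1$ separately, using $q_V-1\geq 0$), the required bound follows after absorbing the $\delta$-terms into the constant $\Ca_{q_V-1}$.

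The main obstacle is the uniform-in-$k$ control of $\Ex|X_{k+\frac{2}{3}}|^{q_V-1}$, rather than the surrounding algebra: Theorem~\ref{theo:moments} is stated only for the full iterates $X_k$, so the argument must transfer the moment bound to the intermediate law $\vrkd$ through the non-expansiveness of the prox and the $\delta$-bounded additive error, and must verify that the weakened power $\lambda_V^{-\frac{q_V-1}{2}}$ appearing in \eqref{K:tau} is genuinely implied by the $\min\{1,\lambda_V^{-m}\}$ bound. Once that moment control and the power comparison are in place, substituting back into the displayed inequality matches the definition of $K(\tau)$ term by term and closes the proof.
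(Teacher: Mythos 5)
Your proposal is correct and takes essentially the same route as the paper: the paper states this corollary as a direct consequence of Lemma~\ref{lem:dmm5}(i) (applied with $\nu=\vrkd$, since $\vrki=\vrkd * g_\tau$) together with the moment bounds of Theorem~\ref{theo:moments}, which is precisely your argument. Your additional details --- dividing by $2\tau$, transferring the moment bound to the intermediate law $\vrkd$ via non-expansiveness of the proximal map and the $\delta$-bounded error, and checking $\min\{1,\lambda_V^{-(q_V-1)}\}\leq\lambda_V^{-\frac{q_V-1}{2}}$ --- simply fill in steps the paper leaves implicit.
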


We are in a position to prove the main inequality. 
\begin{proof}[Proof of Proposition~\ref{prop:main_inequality}]
    Let $\xi_\delta$ be a measure coming from an inexact proximal step and let us denote 
    \[
    \vr_{k+\frac{1}{3}}:=(\prox_{V}^\tau)_\#\vrk,\quad\vr_{k+\frac{2}{3}} := \vr_{k+\frac{1}{3}} * \xi_{\delta},\quad \vr_{k+1} := \vr_{k+\frac{2}{3}} * g_{\tau}\;.
    \]
With this notation we have 
\begin{flalign*}
    \cF[\vrki]-\cF[\nu]=& (\cFV[\vrki]-\cFV[\vr_{k+\frac{2}{3}}]) +(\cFV[\vr_{k+\frac{2}{3}}]-\cFV[\vr_{k+\frac{1}{3}}]) +(\cFV[\vr_{k+\frac{1}{3}}]-\cFV[\nu]) +(\cFE[\vrki]-\cFE[\nu])\\
    =&I_1+I_2+I_3+I_4\,.\end{flalign*}
Now we will bound this four terms separately. 
By Corollary~\ref{coro:dmm5b} we have
\[
 I_1\leq  K(\tau)\,.
\]
By Lemma~\ref{lem:prox_inexact} we have
\[
 I_2 \leq C_V\left(\nu(|\cdot|)+\nu(|\cdot|^{q_V})\delta+\delta\nu(|\cdot|^{q_V-1})+\delta^{q_V}\right)\, \delta\;,
\]
Lemma~\ref{lem:dmm5} {\it (ii)} yields
\[
2\tau I_3 \leq W_2^2 (\vrk, \nu) - W_2^2 (\vr_{k+\frac{1}{3}}, \nu) ,
\]
and Lemma~\ref{lem:dmm5} {\it (iii)} leads to
\[
2\tau I_4  \leq W_2^2 (\vr_{k+\frac{2}{3}}, \nu) - W_2^2 (\hvrki, \nu).
\]
We complete the proof by observing that due to the Lemma~\ref{lem:lem4a}
we have
\[
 W_2^2 (\vr_{k+\frac{2}{3}}, \nu) -W_2^2 (\vr_{k+\frac{1}{3}}, \nu)\leq 2(\nu(|\cdot|)+\vr_{k+\frac{1}{3}}(|\cdot|)+\delta)\delta\,.
\]
    \end{proof}
The last step is to prove error bounds in a strongly convex case.    
\begin{proof}[Proof of Theorem~\ref{theo-main_conv}]
Let us consider the exact proximal Langevin algorithm, i.e. define, for $\hat\vr_0=\vr_0$,  \[
    \hat\vr_{k+\frac{1}{2}}:=(\prox_{V}^\tau)_\#\hvrk,\quad \hvrki := \hat\vr_{k+\frac{1}{2}} * g_{\tau}\;.
    \]
    With the above notation by the same reasoning as in \citet[Proof of Theorem~2]{BCEM} we get
    \begin{equation*}
W_2^2(\hvrk, \minim) \leq \left( 1 - \tfrac{\lambda_V \tau}{2} \right)^k W_2^2 (\vr_0, \minim) + \tfrac{2}{\lambda_V}K(\tau)\,,
\end{equation*}
where $K(\tau)$ is given by \eqref{K:tau}. Given the above inequality we finish the proof by the same lines as in  \citet[Proof of Theorem~6]{BCEM}.
\end{proof}
\section{Explicit formulas for constants}
In this section we present explicit formulas for constants used in the main body of the paper.
\begin{itemize}
    \item The constant $\Ca_m$ from Theorem~\ref{theo:moments} is given by
    \begin{itemize}
        \item for $m\leq 2$ \[\Ca_m= \left[\tfrac{36\kappa^2}{d} + \tfrac{\lambda_V^2}{d}\Ex|X_0|^2 + \tfrac{16\kappa\delta+4 R_V^2\lambda_V + 4\delta R_V\lambda_V  + 8 d}{d} \, \lambda_V\right]^{\frac{m}{2}}\,;\]
        \item for $m>2$, let $a=m-\lfloor m\rfloor$ then
        \begin{multline*}
                    \Ca_m = \Ca_a\Bigg[2^{\lfloor m\rfloor}\ca^{\lfloor m\rfloor}\, \kappa^{\lfloor m\rfloor} d^{-\frac{\lfloor m\rfloor}{2}}+\lfloor m-1\rfloor2\ca(2\kappa\delta +4d)d^{-1}+ d^{-\frac{\lfloor m \rfloor}{2}}\lambda_V^{\lfloor m \rfloor}\left(A_0 +R_V^{\lfloor m \rfloor}\right) \\+\ca d^{-\frac{\lfloor m \rfloor}{2}}\lambda_V^{\lfloor m-1 \rfloor} \left( R_V^{\lfloor m -1\rfloor} +2(2\kappa\delta +4d)R_V^{\lfloor m-2 \rfloor}\lambda_V^{-1} +2^{\lfloor m \rfloor}\kappa\delta^{\lfloor m-1\rfloor}\right) +\ca2^{\lfloor m \rfloor}\widetilde{C}_m
  \lambda_V^{\frac {\lfloor m \rfloor}2} \Bigg]\,;
        \end{multline*}
    \end{itemize}
    \item $C(\nu)$ from Proposition~\ref{prop:main_inequality} can be indicated as follows. Using $|\proxv(x)|\leq|x|$ and by Theorem~\ref{theo:moments} we obtain
    \begin{flalign*}
          C(\nu) &=  C_V\left(\nu(|\cdot|)+\nu(|\cdot|^{q_V})\kappa\tau^{1+\alpha}+\kappa\tau^{1+\alpha}\nu(|\cdot|^{q_V-1})+(\kappa\tau^{1+\alpha})^{q_V}\right) \tau+2(\nu(|\cdot|)+\vr_{k+\frac{1}{3}}(|\cdot|)+\kappa\tau^{1+\alpha})\\
          &\leq C_V\left(\nu(|\cdot|)+\nu(|\cdot|^{q_V})\kappa\tau^{1+\alpha}+\kappa\tau^{1+\alpha}\nu(|\cdot|^{q_V-1})+(\kappa\tau^{1+\alpha})^{q_V}\right) \tau\\
          &\qquad\qquad\qquad\qquad\qquad\qquad+2(\nu(|\cdot|)+\Ca_{1}\min\{1,\lambda_V^{-1}\}{d^{\frac{1}{2}}}+\kappa\tau^{1+\alpha})\,.
    \end{flalign*}
  
  \end{itemize}
\section{Discussion} \label{appendix:simulations}
In this section we present other figures describing observed properties of IPLA in comparison with other approaches.\newline

On Figures~\ref{fig:traceIPLA} and~\ref{fig:trajectories2} we present trace plots of IPLA, TULA, and Metropolis--Hastings Algorithm for Example~1.\newline

\begin{figure}[ht!]
\centering
\includegraphics[width=8cm]{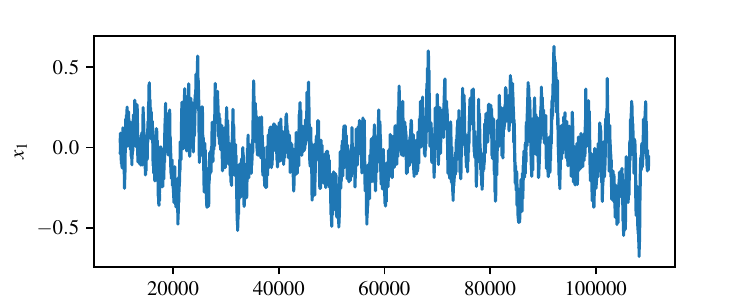}
\caption{Example 1, starting in tail. Trajectory of the 1$^{\mathrm{st}}$ coordinate using IPLA with burn-in time $10\,000$ samples.}
\label{fig:traceIPLA}
\end{figure}

\begin{figure}[ht!]
\centering
\begin{subfigure}{0.48\textwidth}
\includegraphics[width=\textwidth]{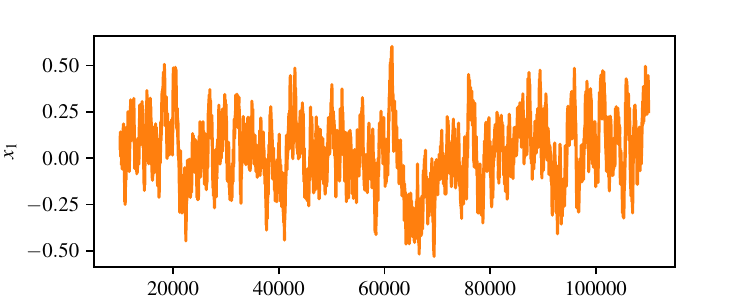}
\caption{TULA}
\end{subfigure}
\begin{subfigure}{0.48\textwidth}
\includegraphics[width=\textwidth]{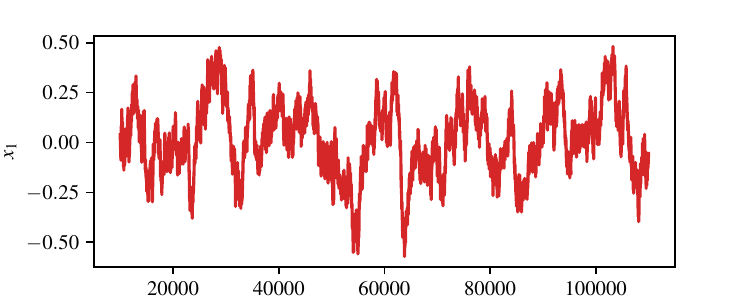}
\caption{MH (reference)}
\end{subfigure}
\caption{Example 1, starting in tail. Trajectory of the 1$^{\mathrm{st}}$ coordinate using TULA (a) and reference Metropolis--Hastings Algorithm (MH) (b) with burn-in time $10\,000$ samples.}
\label{fig:trajectories2}
\end{figure}

Figure~\ref{fig:time_step_minimizer} relates to Figure~\ref{fig:time_step}, but starting in the origin. Figure~\ref{fig:time_step_minimizer}  additionally includes the performance of ULA that quickly blows up.\newline

\begin{figure}
\centering
\begin{subfigure}{0.32\textwidth}
    \includegraphics[width=\textwidth]{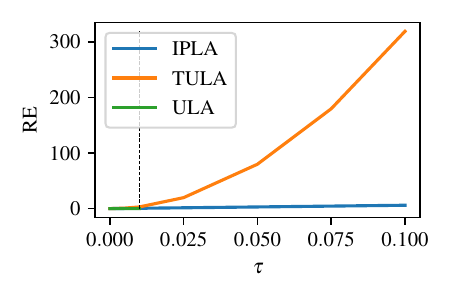}
    \caption{RE}
\end{subfigure}
\begin{subfigure}{0.32\textwidth}
    \includegraphics[width=\textwidth]{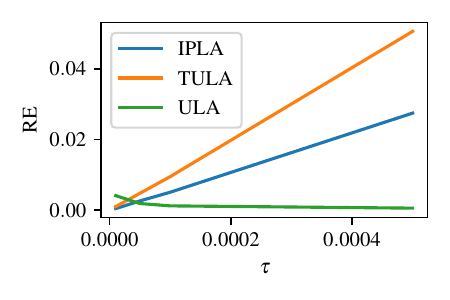}
    \caption{RE (detail)}
\end{subfigure}
\begin{subfigure}{0.32\textwidth}
    \includegraphics[width=\textwidth]{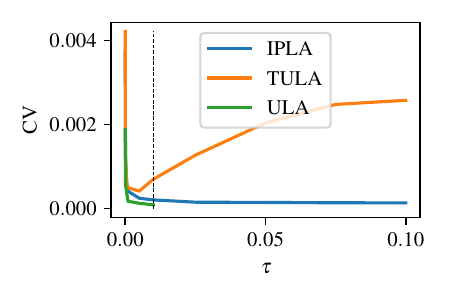}
    \caption{CV}
\end{subfigure}
\caption{Example 1, starting in minimizer. Dependence of RE and CV of IPLA, TULA and ULA on stepsize $\tau$. ULA gives results only for smaller values of $\tau$. Dashed line represents the observed edge of the area of ULA stability ($\tau \approx 0.01$).}
\label{fig:time_step_minimizer}
\end{figure}

Figure~\ref{fig:palaces} illustrates the uncertainty of the Bayesian Image Deconvolution.

\begin{figure}
\begin{subfigure}{0.19\textwidth}
    \includegraphics[width=\textwidth]{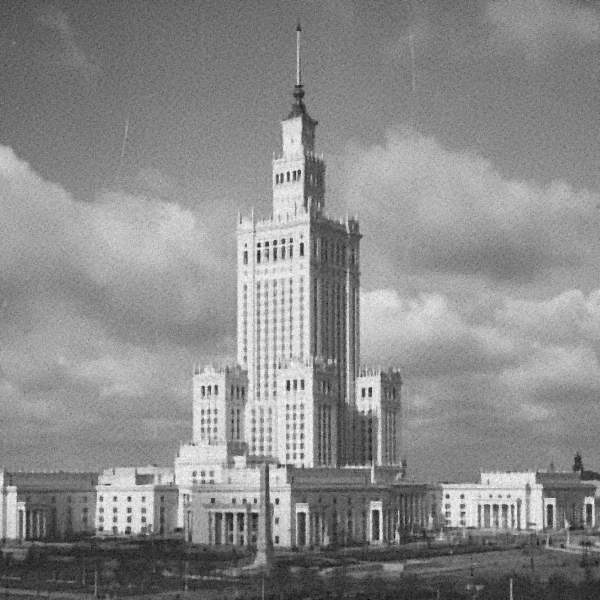}
    \caption{$0.05$}
\end{subfigure}
\begin{subfigure}{0.19\textwidth}
    \includegraphics[width=\textwidth]{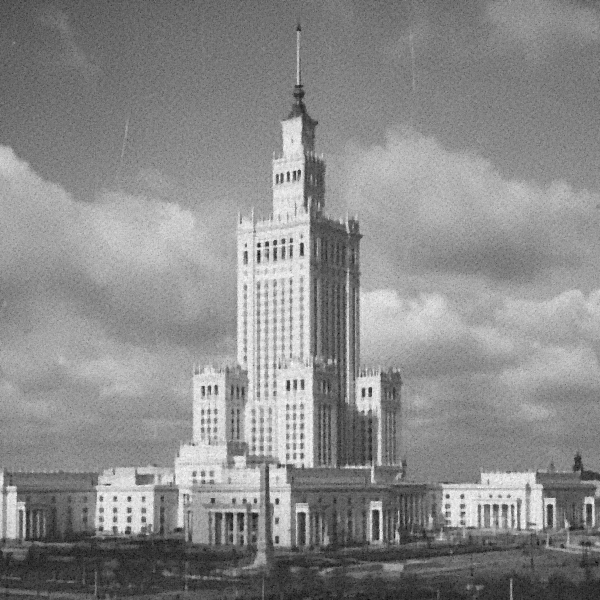}
    \caption{$0.25$}
\end{subfigure}
\begin{subfigure}{0.19\textwidth}
    \includegraphics[width=\textwidth]{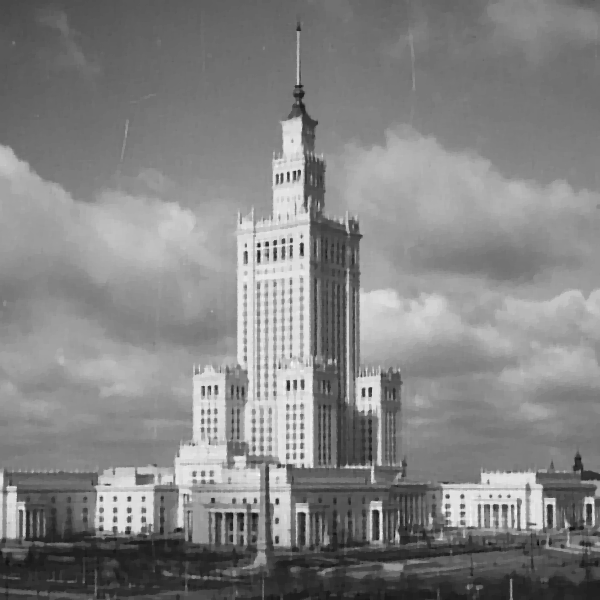}
    \caption{$0.5$}
\end{subfigure}
\begin{subfigure}{0.19\textwidth}
    \includegraphics[width=\textwidth]{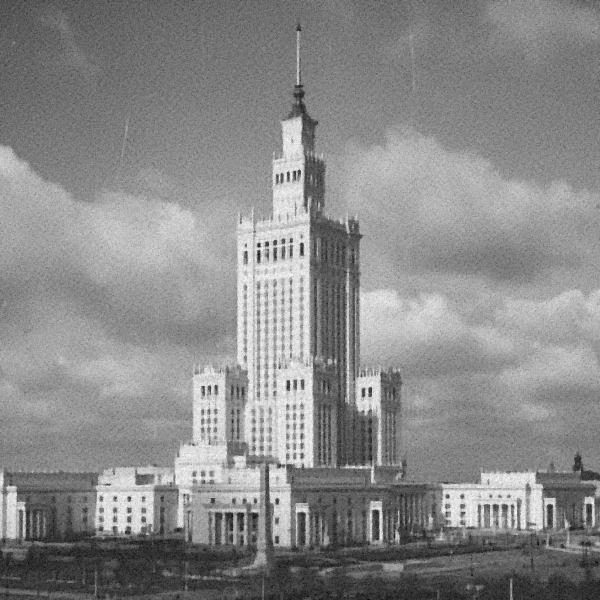}
    \caption{$0.75$}
\end{subfigure}
\begin{subfigure}{0.19\textwidth}
    \includegraphics[width=\textwidth]{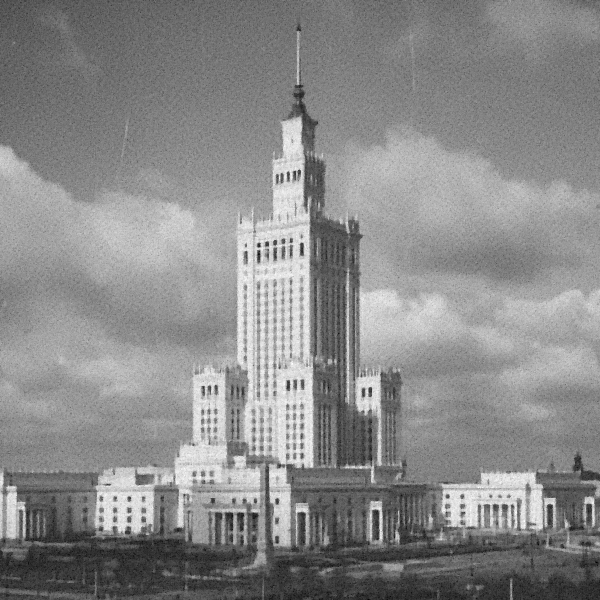}
    \caption{$0.95$}
\end{subfigure}
\caption{Quantiles of the posterior distribution for the Bayesian Image Deconvolution from Example~3 starting in a tail, for $\beta=0.03$ and $\sigma=1$. There are 500 samples drawn with burn in time 50.  }
\label{fig:palaces}
\end{figure}
\section{Proximal operator}
The explicit formula for the operator $\proxv$ is known only in special cases; see, for example, \citet{PS}. However, under our assumptions, an approximation of the proximal operator is a relatively easy task, since the optimized function is strongly convex.  \citet{BCEM} shows that the approximation of the proximal step through the gradient descent algorithm requires the logarithmic number of iterations as a function of precision $\delta$, see \citep[Corollary~5.2, and Remark~5.3]{BCEM}.

\end{document}